\documentclass[review]{elsarticle}

\usepackage{microtype}
\usepackage{graphicx}
\usepackage{subfigure}
\usepackage{booktabs}
\usepackage{hyperref}

\usepackage{lineno,hyperref}
\modulolinenumbers[5]
\usepackage{amsmath}
\usepackage{amssymb}
\usepackage{amsthm}
\usepackage{amsfonts}
\usepackage{threeparttable}
\usepackage{algorithm} 
\usepackage{algorithmic}
\usepackage{color}
\newtheorem{definite}{Definition}
\newtheorem{thm}{Theorem}[section]
\newtheorem{lem}[thm]{Lemma}

\journal{Journal of \LaTeX\ Templates}








\bibliographystyle{elsarticle-num}

\begin{document}

\begin{frontmatter}

\title{Stable Sparse Subspace Embedding for Dimensionality Reduction}


\author[mymainaddress,mysecondaryaddress]{Li Chen}
\author[mysecondaryaddress]{Shuisheng Zhou\corref{mycorrespondingauthor}}
\cortext[mycorrespondingauthor]{Corresponding author}
\ead{sszhou@mail.xidian.edu.cn}
\author[mysecondaryaddress]{Jiajun Ma}
\address[mymainaddress]{College of Physical Education (Main Campus), Zhengzhou University, 100 Science Avenue, Zhengzhou, China}
\address[mysecondaryaddress]{School of Mathematics and Statistics, Xidian University, 266 Xinglong Section, Xifeng Road, Xi'an, China}

\begin{abstract}
Sparse random projection (RP) is a popular tool for dimensionality reduction that shows promising performance with low computational complexity. However, in the existing sparse RP matrices, the positions of non-zero entries are usually randomly selected. Although they adopt uniform sampling with replacement, due to large sampling variance, the number of non-zeros is uneven among rows of the projection matrix which is generated in one trial, and more data information may be lost after dimension reduction. To break this bottleneck, based on random sampling without replacement in statistics, this paper builds a stable sparse subspace embedded matrix (S-SSE), in which non-zeros are uniformly distributed. It is proved that the S-SSE is stabler than the existing matrix, and it can maintain Euclidean distance between points well after dimension reduction. Our empirical studies corroborate our theoretical findings and demonstrate that our approach can indeed achieve satisfactory performance.
\end{abstract}

\begin{keyword}
dimensionality reduction \sep feature projection \sep random projection \sep sparse \sep stable
\end{keyword}

\end{frontmatter}


\section{Introduction}
 Dimensionality reduction, which projects original features into a lower dimensional space, has been a prevalent technique in dealing with high dimensional datasets, because it is able to remove redundant features, reduce memory usage, avoid the curse of dimensionality and improve efficiency of machine learning algorithm. As a preprocessing step, dimensionality reduction has been applied to a variety of problems including $k$-means clustering \cite{Boutsidis2010, Sinha2018icml, Cai2017knowledge}, support vector machines classification \cite{shi2012margin, zhang2013recovering, kumar2008randomized, paul2014random}, $k$-nearest neighbors classification \cite{deegalla2006reducing}, least squares regression, and low rank approximation \cite{Kenneth2017}. However, how to design efficient and effective dimensionality reduction algorithm is a serious challenge problem.

  The goal of dimensionality reduction is to approximate a large matrix $X$ with a much smaller sketch $\hat X$ such that the solution to a given problem on $\hat X$ is a good approximation on $X$. Some works obtain $\hat X$ by low-rank approximation (also known as singular value decomposition (SVD) or principal component analysis(PCA)\cite{Michael2015}). Given a dataset $X\in\mathbb{R}^{m \times n}$, consisting of $m$ data points each having $n$ features, SVD requires $O(mn\min\{m,n\})$ time to reduce data dimensionality from $n$ to $d$ ($d\ll n$), which is prohibitively large even for moderate size datasets. By imposing sparse regularization, some sparse PCA based methods are proposed for dimension reduction, see \cite{Zou2006spca}\cite{Shen2008spca}\cite{leng2009aspca}.
 These low-rank approximation methods can preserve data information well, but they are all based on minimization optimization problems, so it is very hard to solve them and the computation is time consuming. To overcome this obstacle, we study random projection (RP) techniques in this article.


RP multiplies $X$ by the transpose of a random matrix $R\in\mathbb{R}^{d \times n}$, i.e. $\hat{X}=XR^\top \in \mathbb{R}^{m \times d}$, where $d$ is independent of $m$ and $n$, to satisfy $\|R\mathbf{x}\|_2\approx \|\mathbf{x}\|_2$ simultaneously for all samples $\mathbf{x}\in \mathbb{R}^n$ in $X$. 
%
It has been applied in various fields, such as image data \cite{Farhad2017}, text documents \cite{bingham2001random}, face recognition \cite{goel2005face}, privacy preserving distributed data mining \cite{liu2006random}, etc. Compared to SVD-based dimensionality reduction approaches, RP reduces the running time to at most $O(mnd)$. The critical factor affecting the efficiency and effectiveness of RP is the random matrix $R$. A good $R$ is able to make the process of dimensionality reduction efficient, and can well preserve the Euclidean distances between pairwise points after dimensionality reduction.

There are a number of literatures on designing $R$. In \cite{arriaga1999algorithmic}, the entries of $R$ (denoted by $R_{ij}$) obey standard normal distribution having mean $0$ and variance $1$, i.e. $R_{ij}\sim N(0,1)$. Achlioptas \cite{Achlioptas2001} demonstrates that $R_{ij}$ can also have values $+1$ or $-1$ with probability $1/2$, which we denote as $U(1,-1)$. It is proved that $R_{ij}$ in this method have mean $0$ and variance $1$, and the distribution of $R_{ij}$ is symmetric about the origin with $\mathbb{E}(R_{ij}^2) = 1$. This property is sufficient to prove that $(1+\epsilon)$-approximate holds after dimensionality reduction \cite{Achlioptas2001, arriaga1999algorithmic}. Comparing to $R_{ij}\sim N(0,1)$, the advantage of $R_{ij}\sim U(1,-1)$ is that the computation of the projection only contains summations and subtractions, but no multiplications, hence the computation is simple.
 However, because random matrices $R$ are both dense in these two methods, the computational complexity of multiplication $XR^\top$ are both $O(nnz(X)d)$, where $nnz(X)$ denotes the number of nonzero entries in $X$, and $nnz(X)=mn$ when $X$ is dense. This complexity is lower than SVD-based dimensionality reduction approaches as $d\ll n <m$ normally, but it is still high.

\begin{table*}[t]
  \centering
  \caption{Summary of RP methods. The second column corresponds to the type of the matrix $R$. The third column corresponds to the number of extracted features. The forth column corresponds to the number of nonzero entries per column in $R$. The fifth column corresponds to the time complexity of multiplication $XR^\top$. Approximate error are all $1+\epsilon$. $nnz(X)$ denotes the number of non-zeros in $X$. $\epsilon$ and $\delta$ represent the relative error of Euclidean distance and confidence level, respectively.}
  \vskip 0.15in
  \begin{small}
  \begin{threeparttable}
  \begin{tabular}{|c|c|c|c|c|}
  \hline
  Method&Type&Dimensions&\#nonzeros per column&Time for $XR^\top$\\
  \hline
  \cite{dasgupta1999learning}&Density&$O(\frac{\log m}{\epsilon^{2}})$&$O(\frac{\log m}{\epsilon^{2}})$&$O(\frac{nnz(X) \log m}{\epsilon^{2}})$\\
  \hline
  \cite{Achlioptas2001}&Density&$O(\frac{\log m}{\epsilon^{2}})$&$O(\frac{\log m}{\epsilon^{2}})$&$O(\frac{nnz(X)\log m}{\epsilon^{2}})$\\
  \hline
  \cite{Achlioptas2001}&Sparse&$O(\frac{\log m}{\epsilon^{2}})$&$O(\frac{\log m}{3\epsilon^{2}})$&$O(\frac{nnz(X)\log m}{3\epsilon^{2}})$\\
  \hline
  \cite{Daniel2014}&Sparse&$O(\frac{\log (1/\delta)}{\epsilon^{2}})$&$O(\frac{\log(1/\delta)}{\epsilon})$&$O(\frac{nnz(X)\log(1/\delta)}{\epsilon})$\\
  \hline
  This Paper&Sparse&$O(\frac{\log (1/\delta)}{\epsilon^{2}})$&1&$O(nnz(X)$\\
  \hline
\end{tabular}
\end{threeparttable}
\end{small}
\label{tab:diff_methods}
\vskip -0.1in
\end{table*}

 To further reduce the complexity of RP, researchers turn their attention to sparse matrices. The complexity of the multiplication $XR^\top$ is $O(nnz(X)\varrho)$ when $R$ is a sparse matrix, where $\varrho<d$ is the number of nonzero entries in per row. The smaller $\varrho$ is, the less computational cost of RP is. In \cite{Achlioptas2001} and \cite{Li2006}, $R_{ij}\in \{+\sqrt{\kappa}, -\sqrt{\kappa}\}$ with probability $1/2\kappa$, otherwise $0$, where $\kappa\geq3$ such as $\kappa=\sqrt{n}$ or $\kappa=n/\log{n}$. In each row of this matrix, $d/\kappa$ entries are non-zeros, where $d=O(\epsilon^{-2}\log m)$. In \cite{Daniel2014}, $R_{ij}=\eta_{ij}\sigma_{ij}/\sqrt{\varrho}$, where $\sigma_{ij}$ are independent and uniform in $\{-1,+1\}$, $\eta_{ij}$ are indicator random variables for $R_{ij}\neq 0$. Each column of this matrix exactly has $\varrho\geq 2(2\epsilon-\epsilon^2)^{-1}log(1/\delta)>1$ nonzero entries, where $0< \epsilon, \delta <1/2$. These methods are all able to get $(1+\epsilon)$-approximation of Euclidean distance
between points. However, $\varrho$ are all larger than 1, thus $R$ are not sufficiently sparse. Recently, Clarkson et al. \cite{Kenneth2017} and Liu et al. \cite{liu2017sparse} constructed a very sparse embedded (SE) matrix $R$ with $R_{ij}\in\{+1,-1,0\}$. In $R$, each column only contains one nonzero entry. The computational complexity of the multiplication $XR^\top$ is only $O(nnz(X))$, which is the lowest as far as we know. Table \ref{tab:diff_methods} summarizes the properties of the above mentioned methods.

There is one defect in the existing RP matrices that the positions of nonzero entries in each column of $R$ are random. Although the row labels of non-zero entries in each column are obtained by uniform sampling with replacement from $\{1,\ldots,d\}$, such sampling manner leads to a large variance, therefore the number of non-zeros is uneven among rows of the RP matrix that is generated in one trial, which may cause more data information loss after dimension reduction and leads to bad Euclidean distance preservation between points. Moreover, the large variance also causes the generated RP matrices instability, and further leads to the performance of dimension reduction unstable.



To improve stability of the sparse RP matrices as well as reduce variance of the number of nonzero entries among rows in matrix, we use the ideas of randomly sampling without replacement in statistics. To the best of our knowledge, this is the first attempt to improve the stability of RP matrices, and our method is simple and effective. The main contributions are summarized as follows.
\begin{itemize}
  \item The stable sparse subspace embedded matrix is constructed for dimension reduction. In this construction, the idea of uniform sampling without replacement is adopted to obtain the position of nonzero entries in the matrix. In the constructed matrix, each row contains $\lfloor\frac{n}{d}\rfloor$ or $\lfloor\frac{n}{d}\rfloor+1$ nonzero entries, and each column contains only one nonzero. 
  \item We prove that our matrix is stabler than SE matrix \cite{liu2017sparse}.
  \item It is proved that embedding the original data into dimension $d=O(\epsilon^{-2}\log (1/\delta))$ is sufficient to preserve all the pairwise Euclidean distances up to $1\pm\epsilon$. 
  \item Experimental results verify our theoretical analysis, and illustrate that our algorithm outperforms other compared dimension reduction methods.
\end{itemize}

The rest of this paper is organized as follows. Section 2 gives notations used in this paper and introduces theoretical basis of random projections. Section 3 describes sparse embedding method. We propose our stable sparse subspace embedding in section 4 and present its analysis in section 5. Experimental results are presented in section 6. Finally, we summarize the whole article and point out a few questions in section 7.
\section{Preliminaries}
\subsection{Notations and linear algebra}
$X\in \mathbb{R}^{m\times n}$ is the dataset with $m$ samples and $n$ features.
We denote $d$ as the number of reduced features. All logarithms are base-$2$ by $\log$. For a positive integer $n$, we use $[n]$ to denote the set $[1,\ldots,n]$. $\lceil\cdot\rceil$ denotes the smallest integer greater than a number, and $\lfloor\cdot\rfloor$ denotes the largest integer less than a number. $\mathbb{P}(\cdot)$ is the probability of an event. A vector $\mathbf{x}$ is assumed to be a row vector, and $\mathbf{x}^\top$ denotes its transpose. For a vector $\mathbf{x}\in \mathbb{R}^n$, $\|\mathbf{x}\|_2=\sqrt{\sum_{i=1}^n x_i^2}$. For a matrix $R\in \mathbb{R}^{d\times n}$, $\|R\|_F=\sqrt{\sum_{i,j} R_{ij}^2}$ and $\|R\|_2=\sup_{\|\mathbf{x}\|_2=1}\|R\mathbf{x}\|_2=\sqrt{\lambda_{max}(R^\top R)}$, i.e. the square root of the largest eigenvalue of $R^\top R$. $R_{i \cdot }$ denotes all the entries of the $i$-th row in $R$.
\subsection{Theoretical basis of random projections}
RP is a computationally efficient and sufficiently accuracy method as respect to preserving Euclidean distance after dimension reduction. The theoretical basis of RP arises from the following lemma:
\begin{lem}\label{lem:JL}
 \textbf{(Johnson-Lindenstrauss Lemma \cite{Johnson1984, Daniel2014})} For any real numbers $0<\epsilon, \delta<1/2$, there exists an absolute constant $C>0$, such that for any integer $d=C\epsilon^{-2}\log(1/\delta)$, there exists a probability distribution $\mathcal{D}$ on $d\times n$ real matrices such that for any fixed $\mathbf{x}\in \mathbb{R}^n$,
   \begin{equation*}
   \mathbb{P}_{R\sim \mathcal{D}}((1-\epsilon)\|\mathbf{x}\|_2\leq \|R\mathbf{x}\|_2\leq(1+\epsilon)\|\mathbf{x}\|_2)>1-\delta.
   \end{equation*}
   where $R\sim \mathcal{D}$ indicates that the matrix $R$ is a random matrix with distribution $\mathcal{D}$. $\mathbb{P}$ is the probability of a event.
\end{lem}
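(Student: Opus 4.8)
The plan is to exhibit an explicit distribution $\mathcal{D}$ and then verify the two-sided bound by a concentration argument. I would take $\mathcal{D}$ to be the law of $R=\frac{1}{\sqrt{d}}G$, where $G\in\mathbb{R}^{d\times n}$ has i.i.d. entries $G_{ij}\sim N(0,1)$; the Gaussian choice makes the distribution of $R\mathbf{x}$ transparent. Since the claimed inequality is invariant under rescaling $\mathbf{x}$, I would first reduce to the case $\|\mathbf{x}\|_2=1$. Then each coordinate $(G\mathbf{x})_i=\sum_j G_{ij}x_j$ is a linear combination of independent standard normals, hence $(G\mathbf{x})_i\sim N(0,\|\mathbf{x}\|_2^2)=N(0,1)$, and the $d$ coordinates are independent. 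Consequently $Y:=d\,\|R\mathbf{x}\|_2^2=\sum_{i=1}^d(G\mathbf{x})_i^2$ is a chi-squared random variable with $d$ degrees of freedom, so $\mathbb{E}\|R\mathbf{x}\|_2^2=1$ and the entire problem reduces to showing that $Y$ concentrates around $d$.

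Next I would note that it suffices to prove $\mathbb{P}(|Y-d|>\epsilon d)<\delta$: since $0<\epsilon<1/2$ we have $(1-\epsilon)^2\le 1-\epsilon$ and $(1+\epsilon)^2\ge 1+\epsilon$, so the event $\{(1-\epsilon)d\le Y\le(1+\epsilon)d\}$ is contained in $\{(1-\epsilon)^2\le\|R\mathbf{x}\|_2^2\le(1+\epsilon)^2\}$, which is exactly the target event $\{(1-\epsilon)\|\mathbf{x}\|_2\le\|R\mathbf{x}\|_2\le(1+\epsilon)\|\mathbf{x}\|_2\}$. To bound the two tails of $Y$ I would apply the standard Chernoff / moment-generating-function method for chi-squared variables: from $\mathbb{E}[e^{tY}]=(1-2t)^{-d/2}$ for $t<1/2$, Markov's inequality gives $\mathbb{P}(Y\ge(1+\epsilon)d)\le e^{-t(1+\epsilon)d}(1-2t)^{-d/2}$, and optimizing at $t=\frac{\epsilon}{2(1+\epsilon)}$ yields $\mathbb{P}(Y\ge(1+\epsilon)d)\le\bigl((1+\epsilon)e^{-\epsilon}\bigr)^{d/2}$; the analogous computation with $t<0$ gives $\mathbb{P}(Y\le(1-\epsilon)d)\le\bigl((1-\epsilon)e^{\epsilon}\bigr)^{d/2}$.

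The main technical step — and the part I expect to require the most care — is turning these two estimates into a clean $e^{-c\epsilon^2 d}$ bound with an absolute constant $c>0$. Using $\log(1+u)\le u-\tfrac{u^2}{2}+\tfrac{u^3}{3}$ and $\log(1-u)\le -u-\tfrac{u^2}{2}$ for $u\in(0,1)$, one checks that $\log\bigl((1+\epsilon)e^{-\epsilon}\bigr)\le -\tfrac{\epsilon^2}{2}+\tfrac{\epsilon^3}{3}$ and $\log\bigl((1-\epsilon)e^{\epsilon}\bigr)\le -\tfrac{\epsilon^2}{2}$, so (for $\epsilon<1/2$, without optimizing constants) each tail is at most $\exp(-\epsilon^2 d/6)$, and a union bound gives failure probability at most $2\exp(-\epsilon^2 d/6)$. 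Finally I would choose the absolute constant $C$ in $d=C\epsilon^{-2}\log(1/\delta)$ large enough — any $C\ge 12$ works — so that $2\exp(-\epsilon^2 d/6)=2\delta^{C/6}\le\delta$, using $\delta<1/2$ (hence $\log(1/\delta)>1$); this yields the claim. If one prefers to avoid the Taylor-estimate bookkeeping, the same conclusion follows by quoting a standard chi-squared deviation inequality such as the Laurent--Massart bound, but I would include the self-contained moment-generating-function computation for completeness.
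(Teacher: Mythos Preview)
Your argument is correct and is the classical Gaussian proof of the Johnson--Lindenstrauss lemma: exhibit $R=\frac{1}{\sqrt{d}}G$ with i.i.d.\ $N(0,1)$ entries, reduce to $\|\mathbf{x}\|_2=1$, identify $d\|R\mathbf{x}\|_2^2$ as $\chi^2_d$, and bound both tails via the moment generating function. The Taylor bookkeeping and the final choice $C\ge 12$ are fine (using $\delta<1/2$ so that $\log(1/\delta)>1$, with the paper's base-$2$ logarithm convention).

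Note, however, that the paper does \emph{not} supply its own proof of this lemma: Lemma~\ref{lem:JL} is quoted from the cited references and used as background. The paper's own analytic work is Theorem~\ref{thm:preserve}, which concerns the specific S-SSE matrix rather than a Gaussian projection, and proceeds by a different route: it writes $\|R\mathbf{x}\|_2^2-1$ as a Rademacher quadratic form $\sigma^\top A\sigma$ and applies the Hanson--Wright-type moment bound of Lemma~\ref{lem:Hanson-Wright inequality}, together with combinatorial bounds on $\|A\|_F$ and $\|A\|_2$ coming from the one-nonzero-per-column structure. So your proof is a valid self-contained justification of the cited JL lemma, but it is methodologically distinct from what the paper does for its own construction; the Gaussian/MGF approach would not directly apply there because the S-SSE entries are dependent $\{0,\pm1\}$ variables rather than i.i.d.\ Gaussians.
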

Using linearity of $R$ and Lemma \ref{lem:JL} with $\mathbf{x}=\mathbf{u}-\mathbf{v}$, we get that $R$ satisfies $(1-\epsilon)\|\mathbf{u}-\mathbf{v}\|_2 \leq\|R\mathbf{u}-R\mathbf{v}\|_2 \leq (1+\epsilon)\|\mathbf{u}-\mathbf{v}\|_2$ with probability at least $1-\delta$. Therefore, Johnson-Lindenstrauss lemma illustrates that if points in one space are projected onto a randomly extracted subspace with suitable dimension, then the distance between pairwise points are approximately preserved \cite{bingham2001random}. In order to satisfy Johnson-Lindenstrauss Lemma, the entries of random projection matrix $R$ should be i.i.d. with zero mean and unit variance \cite{arriaga1999algorithmic, Achlioptas2001, Daniel2014}. For convenience, we define subspace embedded matrix as follows.
\begin{definite}\label{def:embed}
(Subspace embedded matrix) Given $0< \epsilon, \delta <1$, matrix $R\in \mathbb{R}^{d\times n}$ is a subspace embedded matrix, if for any $\mathbf{x}\in \mathbb{R}^n$,
  \begin{equation*}
    \mathbb{P}((1-\epsilon)\|\mathbf{x}\|_2\leq \|R\mathbf{x}\|_2\leq(1+\epsilon)\|\mathbf{x}\|_2)>1-\delta.
  \end{equation*}
  Moreover, if matrix $R$ is a sparse matrix, then $R$ is a sparse subspace embedded matrix. The probability $\mathbb{P}((1-\epsilon)\|\mathbf{x}\|_2\leq \|R\mathbf{x}\|_2\leq(1+\epsilon)\|\mathbf{x}\|_2)$ is called distance preservation probability.
\end{definite}
The Definition \ref{def:embed} indicates that matrix $R$ embeds space $\mathbb{R}^n$ into $\mathbb{R}^{d}$ while preserving the distance between points $(1+ \epsilon)$-approximation with the probability larger than $1-\delta$. A good subspace embedded matrix makes the Euclidean distance approximation better, and calculates multiplication $XR^\top$ fast.
\section{Sparse embedding}\label{sec:SE}
The sparse embedding algorithm is listed in Algorithm \ref{alg:SE}.

\begin{algorithm}[ht]
\renewcommand{\algorithmicrequire}{\textbf{Input:}}
\renewcommand\algorithmicensure {\textbf{Output:} }
\caption{\textbf{Sparse Embedding \cite{liu2017sparse}}}
\begin{algorithmic}[1]\label{alg:SE}
\REQUIRE Dataset $X\in\mathbb{R}^{m\times n}$.
\ENSURE Sparse embedded matrix $R=\Phi Q\in\mathbb{R}^{d\times n}$ and feature extracted matrix $\hat{X}\in \mathbb{R}^{m\times d}$.
\STATE Build a random map $h$ so that for any $i\in[n]$, $h(i)=j$ for $j\in[d]$ with probability $1/d$, where $0<d<n$.
\STATE Construct matrix $\Phi\in\{0,1\}^{d\times n}$ with $\Phi_{h(i),i}=1$, and all remaining entries 0.
\STATE Construct matrix $Q\in \mathbb{R}^{n\times n}$ is a random diagonal matrix whose entries are i.i.d. Rademacher variables.
\STATE Compute the product $\hat{X}=X(\Phi Q)\top$.
\end{algorithmic}
\end{algorithm}

In Algorithm \ref{alg:SE}, $h$ is a random map so that the row labels of the nonzero entries in $\Phi$ are completely random. This causes that the distribution of nonzero entries is uneven between rows, that is, some rows in $R$ contain more nonzero entries but other rows contain less even none, see Fig.\ref{fig:matrix_a} for an example. In Fig.\ref{fig:matrix_a}, the fifth row contains 10 nonzeros. But the eighth row does not contain any nonzeros. For feature extraction $XR^\top$, the fifth row in the SE matrix indicates that ten features of $X$ are linear combined into one feature, which may lead to more information loss. Moreover, the randomness of position of nonzero entry in per column of $R$ results in $R$ instability, because it is equivalent to random sampling from $[d]$ with replacement as the row label of nonzero entry in per column, the variance of which is large.
\begin{figure}[ht]
\vskip 0.2in
\centering
 \subfigure[SE matrix]{
    \label{fig:matrix_a} 
    \includegraphics[width=\textwidth]{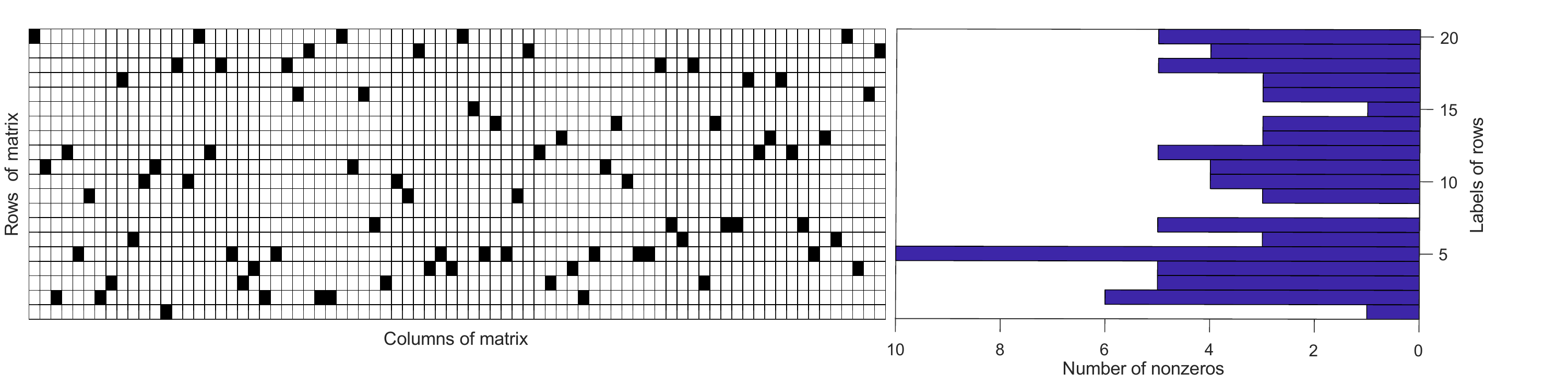}} 
 \subfigure[S-SSE matrix]{
    \label{fig:matrix_b} 
    \includegraphics[width=\textwidth]{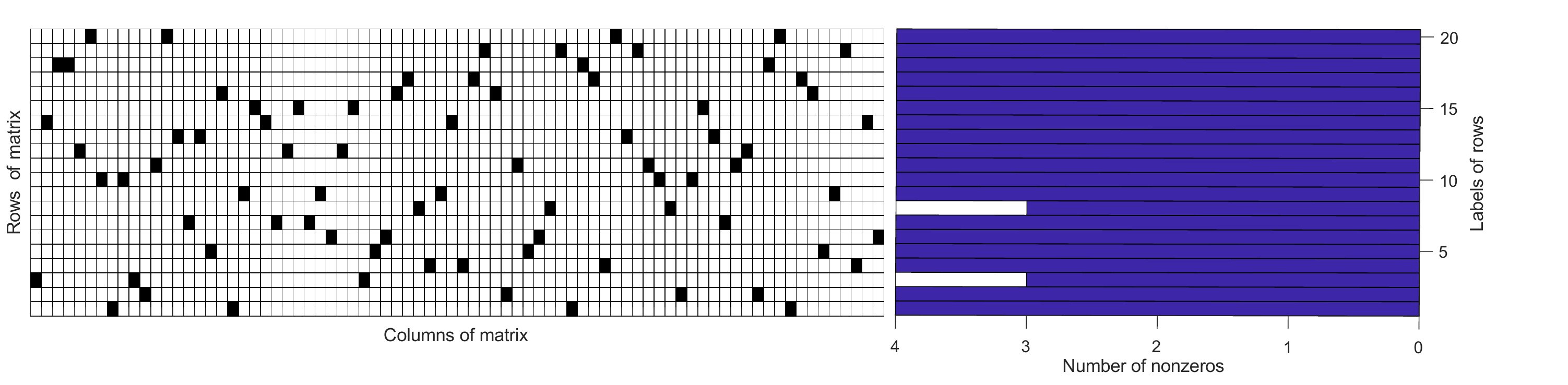}}
 \caption{Nonzero entries in two sparse matrices $R\in\mathbb{R}^{20\times78}$. The black and white boxes in left figures denote the nonzero and zero entries, respectively. Each column only contains one nonzero. Right figures are the number of nonzero entries in each row of matrices. In the Fig.\ref{fig:matrix_a}, the distribution of nonzeros is uneven. Some rows of $R$ contain more nonzero entries but other rows contain less even none. In the Fig.\ref{fig:matrix_b}, the distribution of nonzeros is even. Each row contains 4 or 3 nonzeros.}
 \label{fig:matrix} 
 \vskip -0.2in
 \end{figure}

In the following sections, we build a new sparse subspace embedding matrix and provide theoretical analysis for it in order to overcome the defects of SE.
\section{Stable sparse subspace embedding}\label{sec:S-SSE}
In this section, we design a new sparse subspace embedded (SSE) matrix: Stable SSE matrix (S-SSE). Algorithm \ref{alg:1} gives the construction of S-SSE matrix. In this matrix, each column only has one nonzero entry, which is $+1$ or $-1$ with the same probability. Every row contains almost the same number of non-zeros. 

 \begin{algorithm}[htp]
\renewcommand{\algorithmicrequire}{\textbf{Input:}}
\renewcommand\algorithmicensure {\textbf{Output:} }
\linespread{1.1}\selectfont
\caption{\textbf{Stable Sparse Subspace Embedding (S-SSE)}}
\begin{algorithmic}[1]\label{alg:1}
\REQUIRE Dataset $X\in\mathbb{R}^{m\times n}$.
\ENSURE Embedded matrix $R\in\mathbb{R}^{d\times n}$ and feature extracted matrix $\hat{X}$.
\STATE Set $d=O(\frac{\log (1/\delta)}{\epsilon^2})$.
\STATE Repeat $[d]$ for $\lceil n/d \rceil$ times and obtain a set $D$.
\STATE Randomly sample $n$ elements from $D$ without replacement to construct sequence $\mathcal{S}$.
\STATE Construct matrix ${R}\in\{0,+1,-1\}^{d\times n}$, where ${R}_{\mathcal{S}(i),i}\in\{+1,-1\}$ for $i\in[n]$ with probability $1/2$, and all remaining entries $0$.
\STATE Compute the multiplication $\hat{X}=XR^\top$.
\end{algorithmic}
\vspace*{-3pt}
\end{algorithm}

\emph{\textbf{Remark 1}}. Main difference between S-SSE and SE is the selection of row labels of nonzero entries. SE chooses those by randomly sampling with replacement, whereas our method chooses those by randomly sampling without replacement. The number of nonzero entries in each row of S-SSE matrix is $\lfloor \frac{n}{d} \rfloor$ or $\lceil \frac{n}{d} \rceil$, thus nonzero entries are uniformly distributed among columns of $R$, see Fig.\ref{fig:matrix_b} for an example. Furthermore, because the sampling error of sampling without replacement is smaller than that of sampling with replacement, $R$ constructed by the S-SSE follows a symmetric distribution about zero mean with unit variance better than by the SE, and the S-SSE satisfies the Johnson-Lindenstrauss lemma better \cite{arriaga1999algorithmic, Achlioptas2001, Daniel2014}. This leads to the S-SSE preserving the Euclidean distance better than the SE after dimension reduction, see the experimental results in Figures \ref{fig:distance} - \ref{fig:emsrong-delta}. Therefore, S-SSE may reduce data information loss after feature extraction comparing to the SE as distance between points is the important data information. Moreover, Section \ref{sec:stability} demonstrates that the matrix constructed by our method is stabler than by the SE. 

\emph{\textbf{Remark 2}}. The feature extraction is simple by using the S-SSE. It just needs to add or subtract original features in $X$ to form a new feature, i.e. linear combination of features in $X$ corresponding to the column labels of nonzero entries in the row of $R$. The computation complexity of feature extraction is also only $O(nnz(X))$, which is the same as SE method.
\section{Properties of the S-SSE}\label{sec:properties}
In this section, we prove two good properties of the S-SSE: stability of matrix and preservation of Euclidean distances.
\subsection{Stability of matrix}\label{sec:stability}
The following discussion confirms that the S-SSE matrix is stabler than the SE matrix.

The SE and the S-SSE matrices both contain only one nonzero entry in each column. Therefore, the stability of matrices is determined by the change of non-zero entries in rows. We employ the variance of the number of nonzeros in rows to measure the stability of a matrix. Denote the number of nonzeros in rows of the S-SSE matrix as $\mathbf Y$, then the possible values of $\mathbf Y$ are $\lceil \frac{n}{d} \rceil$ or $\lfloor \frac{n}{d} \rfloor$. Denote the number of nonzeros in rows of the SE matrix as $\mathbf Z$. The possible values of $\mathbf Z$ are $0,1,\ldots,n$, because each row of the SE matrix contains $n$ entries, and the position of non-zero entry in each column is selected randomly. Theorem \ref{thm:stable} indicates that the expectation of $\mathbf Y$ is the same as that of $\mathbf Z$, while the variance of $\mathbf Y$ is less than that of $\mathbf Z$ when $d\geq 2$.

\begin{thm}\label{thm:stable}
Denote $\mathbb{E}(\cdot)$ and $Var(\cdot)$ as the expectation and variance of a variable, respectively. The random variables $\mathbf Y$ and $\mathbf Z$ are the number of nonzeros in rows of the S-SSE matrix and the SE matrix, respectively, then $$\mathbb{E}(\mathbf Y)=\mathbb{E}(\mathbf Z), ~~ Var(\mathbf Y)\leq Var(\mathbf Z).$$
\end{thm}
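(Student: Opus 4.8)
The plan is to identify the exact distribution of each count, read off its first two moments, and then compare. The key observation is that $\mathbf{Z}$ is binomial while $\mathbf{Y}$ is hypergeometric, and the two differ only by the classical finite‑population correction factor, which is at most $1$.

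First I would treat $\mathbf{Z}$. In Algorithm \ref{alg:SE} the map $h$ sends each column $i\in[n]$ to an independent, uniform row label in $[d]$, so for any fixed row $j$ the number of nonzeros in that row is $\mathbf{Z}=\sum_{i=1}^{n}\mathbf{1}\{h(i)=j\}$, a sum of $n$ independent $\mathrm{Bernoulli}(1/d)$ variables. Hence $\mathbf{Z}\sim\mathrm{Binomial}(n,1/d)$, giving $\mathbb{E}(\mathbf{Z})=n/d$ and $Var(\mathbf{Z})=\dfrac{n(d-1)}{d^{2}}$.

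Next I would treat $\mathbf{Y}$. Put $K=\lceil n/d\rceil$ and $N=dK$, the size of the multiset $D$ in Algorithm \ref{alg:1}; each label in $[d]$ occupies exactly $K$ of the $N$ positions of $D$, and $\mathcal{S}$ selects $n$ of those $N$ positions uniformly without replacement. For a fixed row $j$, $\mathbf{Y}$ counts how many of the $K$ positions carrying label $j$ land in $\mathcal{S}$, so $\mathbf{Y}\sim\mathrm{Hypergeometric}(N,K,n)$. I would derive its moments either by quoting the standard formulas or, more self‑containedly, by writing $\mathbf{Y}=\sum_{i=1}^{K}A_i$ with $A_i$ the indicator that the $i$‑th copy of $j$ is sampled: then $\mathbb{E}(A_i)=n/N$ and $\mathbb{E}(A_iA_{i'})=\frac{n(n-1)}{N(N-1)}$ for $i\neq i'$, so $\mathbb{E}(\mathbf{Y})=nK/N=n/d$ and, after simplification,
\begin{equation*}
Var(\mathbf{Y}) \;=\; n\cdot\frac{K}{N}\cdot\frac{N-K}{N}\cdot\frac{N-n}{N-1} \;=\; \frac{n(d-1)}{d^{2}}\cdot\frac{N-n}{N-1}.
\end{equation*}

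Finally I would compare. The expectations agree, $\mathbb{E}(\mathbf{Y})=\mathbb{E}(\mathbf{Z})=n/d$, and the two variances satisfy $Var(\mathbf{Y})=Var(\mathbf{Z})\cdot\frac{N-n}{N-1}$; since $n\le N=d\lceil n/d\rceil$ and $n\ge 1$, the correction factor $\frac{N-n}{N-1}$ lies in $[0,1]$, whence $Var(\mathbf{Y})\le Var(\mathbf{Z})$ — strictly once $d\ge 2$ (and $n\ge 2$), matching the claim in the paragraph before the theorem. I do not anticipate a genuine obstacle: the only points needing care are (i) justifying that a single row's count is marginally hypergeometric even though $D$ is a multiset, which holds because "without replacement" acts on the $N$ labelled positions of $D$, so the $K$ positions bearing $j$ play exactly the role of the marked items in an urn model, and (ii) tracking the ceiling so that $N\ge n$, which is precisely what forces the correction factor to be at most $1$. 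Everything else is the routine binomial/hypergeometric moment computation.
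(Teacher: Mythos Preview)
Your argument is correct and clean, but it follows a genuinely different route from the paper, and in fact rests on a different model for $\mathbf{Y}$.

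The paper does \emph{not} treat $\mathbf{Y}$ as hypergeometric. Writing $n=rd+q$ with $r=\lfloor n/d\rfloor$ and $0\le q<d$, it asserts that $\mathbf{Y}$ is supported on just two values, $\mathbb{P}(\mathbf{Y}=r)=1-q/d$ and $\mathbb{P}(\mathbf{Y}=r+1)=q/d$, whence $Var(\mathbf{Y})=\frac{q}{d}\bigl(1-\frac{q}{d}\bigr)\le\frac14$. It then shows, for $2\le d\le n$, that $Var(\mathbf{Z})=\frac{n}{d}\bigl(1-\frac1d\bigr)\ge\frac{n-1}{n}\ge\frac12$, and compares the two numerical bounds. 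Your approach instead reads Algorithm~\ref{alg:1} literally (sample $n$ of the $N=d\lceil n/d\rceil$ labelled positions of $D$ without replacement), obtains $\mathbf{Y}\sim\mathrm{Hypergeometric}(N,K,n)$ with $K=\lceil n/d\rceil$, and concludes via the finite\nobreakdash-population correction $Var(\mathbf{Y})=Var(\mathbf{Z})\cdot\frac{N-n}{N-1}\le Var(\mathbf{Z})$.

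Two remarks are worth making. First, these two distributions for $\mathbf{Y}$ are \emph{not} the same in general: under your hypergeometric reading, $\mathbf{Y}$ can take more than two values (e.g.\ $n=7$, $d=3$ gives $\mathbb{P}(\mathbf{Y}=1)>0$), whereas the paper's Remark~1 and its proof insist each row has exactly $\lfloor n/d\rfloor$ or $\lceil n/d\rceil$ nonzeros. Your model is what the pseudocode of Algorithm~\ref{alg:1} actually produces; the paper's two\nobreakdash-point model is what the authors evidently intend. Either way the theorem holds. Second, what each route buys: your correction\nobreakdash-factor argument is uniform (no case split on $d$) and makes the source of the improvement transparent; the paper's argument is cruder but yields the dimension\nobreakdash-free bound $Var(\mathbf{Y})\le\frac14$ used in Remark~3, which your hypergeometric variance does not satisfy in general.
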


\begin{proof}
Set $n=rd+q$, where $r=\lfloor\frac{n}{d}\rfloor$ and $0\leq q< d$ is an integer, the distribution of $\mathbf Y$ is
$$\mathbb{P}(\mathbf{Y}=r)=1-\frac{q}{d}, ~\mathbb{P}(\mathbf{Y}=r+1)=\frac{q}{d}.$$
The expectation of $\mathbf{Y}$ is
\begin{equation}\label{eq:qiwang_Y}
\mathbb{E}(\mathbf{Y})=r(1-\frac{q}{d})+(r+1) \frac{q}{d}=\frac{n}{d}.
\end{equation}
In addition, because
\begin{equation*}
\mathbb{E}(\mathbf{Y}^2)=r^2(1-\frac{q}{d})+(r+1)^2 \frac{q}{d},
\end{equation*}
the variance of $\mathbf{Y}$ is
\begin{equation}\label{eq:fangcha_Y}
Var(\mathbf{Y})=\mathbb{E}(\mathbf{Y}^2)-[\mathbb{E}(\mathbf{Y})]^2=\frac{q}{d}-(\frac{q}{d})^2.
\end{equation}

In the following, we compute the expectation and variance of $\mathbf{Z}$. Let random event $B$ mean ``non-zero is in the $i$-th row" and $\bar{B}$ mean ``non-zero is not in the $i$-th row". Because the row label of non-zero entry in each column is randomly chosen, which is equivalent to randomly sampling with replacement from $[d]$, therefore $\mathbb{P}(B)=\frac{1}{d}$ and $\mathbb{P}(\bar{B})=1-\frac{1}{d}$. The random variable $\mathbf{Z}$ is the number of times that $B$ occurs in $n$ Bernoulli trials. Hence $\mathbf{Z}$ obeys the binomial distribution, and the distribution of $\mathbf{Z}$ is
$$\mathbb{P}(\mathbf{Z}=k)=C_n^k(\frac{1}{d})^k(1-\frac{1}{d})^{n-k},~k=0,1,\ldots,n.$$
The expectation and variance of $\mathbf{Z}$ are
\begin{equation}\label{eq:qiwang_Z}
\mathbb{E}(\mathbf{Z})=\frac{n}{d},
\end{equation}
\begin{equation}\label{eq:fangcha_Z}
Var(\mathbf{Z})=n(\frac{1}{d})(1-\frac{1}{d}).
\end{equation}

Eqs. \eqref{eq:qiwang_Y} and \eqref{eq:qiwang_Z} indicate that $\mathbf{E}(\mathbf{Y})=\mathbf{E}(\mathbf{Z})$. Next, we prove $Var(\mathbf{Y})\leq Var(\mathbf{Z})$. If $d=1$, then $Var(\mathbf{Y})= Var(\mathbf{Z})=0$. If $2\leq d\leq n$, then $Var(\mathbf{Z})\geq \frac{n-1}{n}\geq \frac{1}{2}$, while $Var(\mathbf{Y})\leq \frac{1}{4}$, hence $Var(\mathbf{Y})< Var(\mathbf{Z})$. 
Therefore, $Var(\mathbf{Y})\leq Var(\mathbf{Z})$, where the equality sign holds only when $d=1$.
\end{proof}

\emph{\textbf{Remark 3}}. Eq. \eqref{eq:fangcha_Y} indicates that the variance of $\mathbf Y$ is related to $q=n ~ \text{mod} ~d$. When $q=0$, then $Var(\mathbf Y)=0$, that is, if $n$ can be divided by $d$ without remainder, then each row of the S-SSE matrix contains the same number of non-zeros. When $q=\frac{d}{2}$, the $Var(\mathbf{Y})$ reaches the maximum $\frac{1}{4}$. In comparison, the $Var(\mathbf{Z})$ is not less than $\frac{1}{4}$.

\emph{\textbf{Remark 4}}. Theorem \ref{thm:stable} illustrates that the number of non-zeros in rows of the SE matrix changes greater than that of the S-SSE matrix, which leads to large variety among rows in the SE matrix, and further causes the generated matrices changes greatly. Therefore, the SE matrix is more unstable than S-SSE matrix.

\subsection{Preservation the Euclidean distances}\label{sec:preservation}
 In this subsection, we prove that our S-SSE matrix can preserve pairwise Euclidean distance up to $1\pm \epsilon$.

\begin{lem}\label{lem:Hanson-Wright inequality}
\cite{Diakonikolas2010, Daniel2014} Let $B \in \Re^{n\times n}$ be symmetric and $\mathbf{z} \in \{+1, -1\}^n$ be random. Then for all $l \geq 2$,
$$\mathbb{E}[|(\mathbf{z}^\top B \mathbf{z}) - tr(B)|^l] \leq C^l \cdot \max\{\sqrt{l}\|B\|_F, l\|B\|_2\}^l$$
where $C>0$ is a universal constant.
\end{lem}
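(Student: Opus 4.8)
\noindent\emph{Proof proposal.} This is the Hanson--Wright moment inequality, and the plan is the classical decoupling-plus-moment-generating-function argument. The starting point is that, since $z_i^2=1$ for a Rademacher vector, the diagonal of $B$ contributes exactly $\mathrm{tr}(B)$, so that
\begin{equation*}
\mathbf{z}^\top B\mathbf{z}-\mathrm{tr}(B)=\sum_{i\neq j}B_{ij}z_iz_j
\end{equation*}
is a degree-two Rademacher chaos with no diagonal term. First I would invoke a standard decoupling inequality for quadratic Rademacher chaos (of de la Pe\~na--Gin\'e type), which holds both at the level of moments and at the level of the moment generating function, to replace this quantity --- up to an absolute multiplicative constant in the exponent --- by the decoupled bilinear form $S:=\sum_{i,j}B_{ij}z_iz_j'$, where $\mathbf{z}'\in\{+1,-1\}^n$ is an independent copy of $\mathbf{z}$; the diagonal terms $B_{ii}z_iz_i'$ thereby reintroduced are harmless.

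Next I would estimate the moment generating function of $S$. Conditioning on $\mathbf{z}$, the sum $S=\sum_j(B\mathbf{z})_j z_j'$ is linear in the independent Rademachers $z_j'$, so $\mathbb{E}_{\mathbf{z}'}e^{\lambda S}=\prod_j\cosh(\lambda(B\mathbf{z})_j)\leq e^{\lambda^2\|B\mathbf{z}\|_2^2/2}$. Taking the expectation over $\mathbf{z}$ and linearizing the exponent through the Gaussian identity $e^{\lambda^2\|v\|_2^2/2}=\mathbb{E}_g e^{\lambda\langle g,v\rangle}$ with $g\sim N(0,I_n)$ and $v=B\mathbf{z}$, then interchanging the two expectations and using $\mathbb{E}_{\mathbf{z}}e^{\lambda\langle B^\top g,\mathbf{z}\rangle}=\prod_i\cosh(\lambda(B^\top g)_i)\leq e^{\lambda^2 g^\top B^2 g/2}$ (here $B^\top=B$), I arrive at
\begin{equation*}
\mathbb{E}\,e^{\lambda S}\leq \mathbb{E}_g\,e^{\lambda^2 g^\top B^2 g/2}=\prod_i(1-\lambda^2\sigma_i^2)^{-1/2}\qquad\text{for }|\lambda|<1/\|B\|_2,
\end{equation*}
where $\sigma_1,\dots,\sigma_n$ are the singular values of $B$, because $g^\top B^2 g$ is a Gaussian quadratic form that diagonalizes. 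Applying $-\log(1-x)\leq x+x^2$ on $[0,\tfrac12]$ together with $\sum_i\sigma_i^2=\|B\|_F^2$ and $\sum_i\sigma_i^4\leq\|B\|_2^2\|B\|_F^2$ then yields, after undoing the decoupling rescaling, a sub-exponential bound of the form $\mathbb{E}\,e^{\lambda(\mathbf{z}^\top B\mathbf{z}-\mathrm{tr}(B))}\leq e^{c_1\lambda^2\|B\|_F^2}$ valid for all $|\lambda|\leq c_2/\|B\|_2$, with absolute constants $c_1,c_2>0$.

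Finally I would convert this two-parameter MGF bound into the stated moment bound by the routine estimate: a centered random variable $W$ with $\mathbb{E}e^{\lambda W}\leq e^{c_1\lambda^2 v^2}$ for $|\lambda|\leq 1/b$ obeys the Bernstein-type tail $\mathbb{P}(|W|>t)\leq 2\exp(-c\min\{t^2/v^2,\,t/b\})$, hence $(\mathbb{E}|W|^l)^{1/l}\leq C(\sqrt{l}\,v+l\,b)$ for every $l\geq2$; with $v\asymp\|B\|_F$ and $b\asymp\|B\|_2$ this reads $(\mathbb{E}|W|^l)^{1/l}\leq C\max\{\sqrt{l}\,\|B\|_F,\ l\,\|B\|_2\}$, and raising to the $l$-th power gives exactly the claim of the lemma.

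The step I expect to be the main obstacle is making the decoupling rigorous in the strong form required downstream --- specifically the exponential version used to pass from $\sum_{i\neq j}B_{ij}z_iz_j$ to the conditionally-linear form $S$ while retaining control of the absolute constant --- along with carefully justifying the Gaussian-linearization and exchange-of-expectations step and the precise range of $\lambda$ on which $\prod_i(1-\lambda^2\sigma_i^2)^{-1/2}$ is finite and well behaved. An alternative that avoids the MGF is to bound $\mathbb{E}_{\mathbf{z}}\|B\mathbf{z}\|_2^l$ directly by Talagrand's concentration inequality for convex $1$-Lipschitz functions on the cube, noting that $\mathbf{z}\mapsto\|B\mathbf{z}\|_2$ has Lipschitz constant $\|B\|_2$ and mean at most $\|B\|_F$; this merely trades the decoupling bookkeeping for the invocation of that inequality. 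Everything else --- the diagonal cancellation, the $\cosh$/sub-Gaussian estimates, the singular-value bookkeeping, and the MGF-to-moments conversion --- is standard.
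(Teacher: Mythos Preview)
Your proposal is a correct outline of the standard decoupling-plus-MGF proof of the Hanson--Wright moment inequality (essentially the Rudelson--Vershynin argument). There is, however, nothing to compare against in the paper: Lemma~\ref{lem:Hanson-Wright inequality} is quoted from \cite{Diakonikolas2010, Daniel2014} as a black box and is not proved there; it is simply invoked in the proof of Theorem~\ref{thm:preserve}. So your write-up is not an alternative to the paper's argument but rather a supply of the missing external proof, and as such it is sound---the decoupling step, the Gaussian linearization, the diagonalization of $g^\top B^2 g$, and the Bernstein-tail-to-moment conversion are all standard and your identification of the decoupling constant as the only delicate point is accurate.
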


 \begin{thm}\label{thm:preserve}
 The matrix $R\in \mathbb{R}^{d\times n}$ is constructed by Algorithm \ref{alg:1}. Given $0<\epsilon,\delta<\frac{1}{2}$, there exists $d=O(\frac{\log(1/\delta)}{\epsilon^2})$ such that $R$ is a sparse subspace embedding matrix, i.e. for any $\mathbf{x} \in \mathbb{R}^n$,
 \begin{equation}\label{eq:thm1}
 \mathbb{P}((1-\epsilon)\|\mathbf{x}\|_2\leq \|R\mathbf{x}\|_2\leq(1+\epsilon)\|\mathbf{x}\|_2)>1-\delta.
 \end{equation}
 \end{thm}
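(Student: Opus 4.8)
The plan is to follow the standard route for analysing sparse Johnson--Lindenstrauss maps: write $\|R\mathbf{x}\|_2^2$ as a degree-two Rademacher chaos, control its moments with the Hanson--Wright inequality (Lemma~\ref{lem:Hanson-Wright inequality}), and finish with Markov's inequality at a moment order proportional to $\log(1/\delta)$. By homogeneity I may assume $\|\mathbf{x}\|_2=1$, and it suffices to prove $\mathbb{P}\big(|\,\|R\mathbf{x}\|_2^2-1\,|>\epsilon\big)<\delta$: for $0<\epsilon<1$ the event $|\,\|R\mathbf{x}\|_2^2-1\,|\le\epsilon$ forces $(1-\epsilon)^2\le \|R\mathbf{x}\|_2^2\le(1+\epsilon)^2$, which is exactly \eqref{eq:thm1}.

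First I would fix notation for the matrix produced by Algorithm~\ref{alg:1}. Write $R_{ji}=\eta_{ji}\sigma_i$, where $\sigma_i\in\{+1,-1\}$ are i.i.d.\ Rademacher signs independent of everything else, and $\eta_{ji}$ equals $1$ when $\mathcal{S}(i)=j$ and $0$ otherwise, so $\sum_j\eta_{ji}=1$ for each column $i$ and $\sum_i\eta_{ji}\in\{\lfloor n/d\rfloor,\lceil n/d\rceil\}$ for each row $j$. Expanding the square,
\[
\|R\mathbf{x}\|_2^2=\sum_{j=1}^d\Big(\sum_{i=1}^n\eta_{ji}\sigma_ix_i\Big)^2=\sum_{i,i'}\sigma_i\sigma_{i'}x_ix_{i'}\sum_{j}\eta_{ji}\eta_{ji'}=\sigma^\top A\,\sigma,
\]
where $A=A(\mathcal{S})$ is the symmetric matrix with $A_{ii'}=x_ix_{i'}$ when $\mathcal{S}(i)=\mathcal{S}(i')$ and $A_{ii'}=0$ otherwise. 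Since $\sum_j\eta_{ji}^2=1$ we have $tr(A)=\sum_i x_i^2=1$, so conditionally on the positions $\mathcal{S}$, $\mathbb{E}_\sigma\|R\mathbf{x}\|_2^2=tr(A)=1$; the estimator is exactly unbiased for every realisation of $\mathcal{S}$.

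Conditioning on $\mathcal{S}$ and applying Lemma~\ref{lem:Hanson-Wright inequality} with $B=A$ and $\mathbf{z}=\sigma$ gives, for every integer $l\ge2$,
\[
\mathbb{E}_\sigma\big[\,|\,\|R\mathbf{x}\|_2^2-1\,|^l\,\big]\le C^l\max\{\sqrt{l}\,\|A\|_F,\ l\,\|A\|_2\}^l .
\]
Grouping coordinates by their bucket $\mathcal{S}(i)$ makes $A$ block-diagonal with $j$-th block $\mathbf{x}_{(j)}\mathbf{x}_{(j)}^\top$, where $\mathbf{x}_{(j)}$ is $\mathbf{x}$ restricted to bucket $j$. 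Hence, writing $w_j:=\|\mathbf{x}_{(j)}\|_2^2=\sum_{i:\mathcal{S}(i)=j}x_i^2$ (so $\sum_j w_j=1$), one gets the clean identities $\|A\|_2=\max_j w_j$ and $\|A\|_F^2=\sum_j w_j^2\le\max_j w_j$. Thus everything reduces to showing $W:=\max_j w_j$ is small with high probability over $\mathcal{S}$: if $W\le c\,\epsilon^2/\log(1/\delta)$ for a small absolute $c$, then $\sqrt l\,\|A\|_F\le\sqrt{lW}$ and $l\,\|A\|_2=lW$ are both $\le\sqrt{c}\,\epsilon$ at $l=\Theta(\log(1/\delta))$, the moment bound becomes $(C\sqrt c)^l\le 2^{-l}$, and Markov gives $\mathbb{P}_\sigma(|\,\|R\mathbf{x}\|_2^2-1\,|>\epsilon\mid\mathcal{S})\le 2^{-l}\le\delta/2$.

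The main obstacle is precisely this control of $W=\max_j w_j$, i.e.\ handling vectors $\mathbf{x}$ whose $\ell_2^2$-mass is concentrated on few coordinates. When $\mathbf{x}$ is spread out, the balanced bucket sizes $|\{i:\mathcal{S}(i)=j\}|\le\lceil n/d\rceil$ forced by sampling without replacement, combined with a concentration bound for sampling without replacement (Serfling/Bernstein type), keep every $w_j$ close to its mean $1/d=\Theta(\epsilon^2/\log(1/\delta))$ with the required probability. For $\mathbf{x}$ with heavy coordinates I would split $[n]$ into a heavy set $H$ (coordinates with $x_i^2$ above a threshold) and a light set $L$: the heavy part affects $\|R\mathbf{x}\|_2^2$ only through collisions among the few coordinates of $H$, an event of probability $O(|H|^2/d)$, and otherwise contributes exactly $\sum_{i\in H}x_i^2$, while the light part is handled by the spread-out argument; tuning the threshold balances the two failure modes. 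Finally, combining the good event for $W$ (probability $\ge1-\delta/2$) with the conditional Markov bound (probability $\ge1-\delta/2$) by a union bound, and taking $d=O(\epsilon^{-2}\log(1/\delta))$, yields \eqref{eq:thm1}.
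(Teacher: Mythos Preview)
Your decomposition is cleaner than the paper's but takes a genuinely different route. The paper attaches a separate Rademacher sign $\sigma_{ti}$ to every \emph{entry} of $R$ and packs the cross terms into a $dn\times dn$ block-diagonal matrix $A$; it then bounds $\|A\|_F\le1$ and $\|A\|_2\le1$ with no dependence on $d$ whatsoever, and only at the very last line makes $d$ appear by writing the universal Hanson--Wright constant as $C=C_1/d$. Your $n\times n$ version with one sign per column is algebraically equivalent but exposes where $d$ must actually enter: $\|A\|_2=\max_j w_j$ and $\|A\|_F^2=\sum_j w_j^2$, so the whole argument rests on showing $W=\max_j w_j$ is small.

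That is exactly where your proposal has a genuine gap. The heavy/light split cannot close at $d=O(\epsilon^{-2}\log(1/\delta))$. Take $\mathbf{x}=(1/\sqrt2,1/\sqrt2,0,\dots,0)\in\mathbb{R}^n$ with $n\gg d$: here $|H|=2$, and under Algorithm~\ref{alg:1} the probability that columns $1$ and $2$ land in the same row is $(\lceil n/d\rceil-1)/(d\lceil n/d\rceil-1)\sim 1/d$. On that event $\|R\mathbf{x}\|_2^2=\tfrac12(\sigma_1+\sigma_2)^2\in\{0,2\}$, so $|\,\|R\mathbf{x}\|_2^2-1\,|=1>\epsilon$ with probability one. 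Hence $\mathbb{P}\big(|\,\|R\mathbf{x}\|_2^2-1\,|>\epsilon\big)\gtrsim 1/d$, and forcing this below $\delta/2$ requires $d=\Omega(1/\delta)$, not $O(\epsilon^{-2}\log(1/\delta))$. No tuning of the heavy/light threshold helps: the $\Theta(1/d)$ collision probability for a pair of heavy coordinates is a hard lower bound for any column-$1$-sparse map, so the step ``tuning the threshold balances the two failure modes'' cannot be completed at the claimed dimension. In short, the obstacle you correctly isolated---controlling $W$---is not removable by the proposed split; the paper's argument sidesteps it by absorbing $d$ into the constant $C$ rather than into the norms of $A$.
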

 \begin{proof}
 Assume $\mathbf{x}$ is a unit vector, i.e. $\|\mathbf{x}\|_2^2=1$, which can be obtained in data preprocessing step. Therefore, \eqref{eq:thm1} is translated into
 $$\mathbb{P}(1-\epsilon\leq \|R\mathbf{x}\|_2\leq 1+\epsilon)>1-\delta.$$
 It is equal to the following inequation:
 $$\mathbb{P}(|\|R\mathbf{x}\|_2^2-1|>2\epsilon-\epsilon^2)<\delta.$$
 For convenience, we denote $h=\|R\mathbf{x}\|_2^2-1$, then \eqref{eq:thm1} is equal to
 \begin{equation}\label{eq:thm2}
 \mathbb{P}(|h|>2\epsilon-\epsilon^2)<\delta.
 \end{equation}

 We rewrite the entries of matrix $R$ as $R_{ij}=\eta_{ij}\sigma_{ij}$, where $\eta_{ij}$ is an indicator random variable for $R_{ij}\neq 0$, $\sigma_{ij}\in \{+1,-1\}$, 
 then
 \begin{equation*}
 h=\|R\mathbf{x}\|_2^2-1=\sum\limits_{t=1}^{d}\sum\limits_{i\neq j\in[n]}\eta_{ti}\eta_{tj}\sigma_{ti}\sigma_{tj}x_ix_j:=\sigma^\top A \sigma,
 \end{equation*}
 where $A$ is a $dn\times dn$ block diagonal matrix. It can be divided into $d$ blocks with each $n\times n$. For the $t$-th block $A_t$,
 \begin{equation*}
  (A_t)_{ij}=
  \begin{cases}
  \eta_{ti}\eta_{tj}x_ix_j, & i\neq j,\\
  0,&i=j.
  \end{cases}
 \end{equation*}
 Then,
 \begin{equation}\label{eq:thm3}
 \begin{split}
 \mathbb{P}(|h|>2\epsilon-\epsilon^2)&=\mathbb{P}(|\sigma^\top A \sigma|>2\epsilon-\epsilon^2)\\
 &=\mathbb{P}(|\sigma^\top A \sigma-\mathrm{tr}(A)|>2\epsilon-\epsilon^2)\\
 &=\mathbb{P}(|\sigma^\top A \sigma-\mathrm{tr}(A)|^l>(2\epsilon-\epsilon^2)^l)\\
 &\leq (2\epsilon-\epsilon^2)^{-l}\mathbb{E}(|\sigma^\top A \sigma-\mathrm{tr}(A)|^l)\\
 &\leq (2\epsilon-\epsilon^2)^{-l} C^l \max\{ \sqrt{l}\|A\|_F, l\|A\|_2\}^l
 \end{split}
 \end{equation}
 where $\mathrm{tr}(A)$ is the trace of the matrix $A$ and $\mathrm{tr}(A)=0$ as $A_{ii}=0$. $l\geq 2$. $C>0$ is some universal constant. The first inequality uses Markov-Bound. The second inequality uses Lemma \ref{lem:Hanson-Wright inequality} with $\mathbf{z}=\sigma$ and $B=A$. Next, we compute the bounds of $\|A\|_F$ and $\|A\|_2$.

 For any $i\neq j\in [n]$, $\sum_{t=1}^{d}\eta_{ti}\eta_{tj}\leq 1$, which indicates that the number of non-zero entries in the same row is no more than $1$ in two columns. We have
 \begin{equation}\label{eq:thm_Fnorm}
 \begin{split}
 \|A\|_F^2&=\sum\limits_{i\neq j\in [n]}x_i^2x_j^2\sum_{t=1}^{d}\eta_{ti}\eta_{tj}\\
 &\leq \sum\limits_{i\neq j\in [n]}x_i^2x_j^2 \leq \|\mathbf{x}\|_2^4 \leq 1.
 \end{split}
 \end{equation}

 Moreover, we can prove that
 \begin{equation}\label{eq:thm_2norm}
 \|A\|_2\leq 1.
 \end{equation}

Rewrite $A_t$ as $A_t=\bar{R}_t-\bar{D}_t$, here $(\bar{R}_t)_{ij}=\eta_{ti}\eta_{tj}x_ix_j$, $\bar{D}_t$ is a diagonal matrix with $(\bar{D}_t)_{ii}=\eta_{ti}x_i^2$. Because $\bar{R}_t$ and $\bar{D}_t$ are both positive semidefinite, we have $\|A\|_2\leq \max\{\|\bar{R}_t\|_2, \|\bar{D}_t\|_2\}$. $\|\bar{D}_t\|_2\leq \|\mathbf{x}\|_\infty^2\leq 1$. Denote $v_i=\eta_{ti}x_i$ and $\mathbf{v}\in \mathbb{R}^n$, then $\bar{R}_t=\mathbf{v}\mathbf{v}^\top$ and $\|\bar{R}_t\|^2=\|\mathbf{v}\mathbf{v}^\top\|_2^2\leq \|\mathbf{x}\|_2^2=1$. Therefore, $\|A\|_2\leq 1$.

 Substitute \eqref{eq:thm_Fnorm} and \eqref{eq:thm_2norm} into \eqref{eq:thm3}, we obtain
 \begin{equation}\label{eq:delta}
 \mathbf{P}(|h|>2\epsilon-\epsilon^2)\leq (2\epsilon-\epsilon^2)^{-l}C^ll^l<(\frac{1}{3}\cdot \frac{Cl}{\epsilon^2})^l.
 \end{equation}
  Let $C=\frac{C_1}{d}$, where $C_1>0$ is a constant, $l=\log(1/\delta)$. In order to make \eqref{eq:delta} less than $\delta=(\frac{1}{2})^l$, we need $d>\frac{2}{3}C_1\frac{l}{\epsilon^2}=O(\frac{\log(1/\delta)}{\epsilon^2})$.
 Therefore, the theorem \ref{thm:preserve} is proved.
 \end{proof}


 \emph{\textbf{Remark 5}}. With regard to the SE method, Clarkson et al. proved that $d=O((u/\epsilon)^4\log^2(u/\epsilon))$ can make $\|R\mathbf{x}\|_2=\|\mathbf{x}\|_2$ with probability at least $9/10$ \cite{Kenneth2017}, where $u$ is the rank of $X$. Liu et al. proved that $d=O(\max\{\frac{k+\log(1/\delta)}{\epsilon^2}, \frac{6}{\epsilon^2\delta}\})$ can get the $\epsilon$-approximately optimizing solution of $k$-means clustering \cite{liu2017sparse}. By comparison, we demonstrate that $d=O(\frac{\log(1/\delta)}{\epsilon^2})$ is sufficient for S-SSE to preserve Euclidean distance up to $(1+ \epsilon)$-approximation, and our proof is simpler.
 \section{Experiment}
 We compare our method S-SSE with several other feature extraction methods to evaluate the performance of the S-SSE. They are listed below:
\begin{itemize}
  \item\textbf{SPCA}: Sparse principal component analysis is proposed by \cite{Zou2006spca}. SPCA imposes the lasso (elastic net) constraint into the PCA to promote sparse. The matrix deduced by SPCA is a sparse matrix.
  \item \textbf{DE}: The density embedding (DE) method is proposed by \cite{Achlioptas2001}. In this method, $R$ is dense, $R_{ij}\in \{1,-1\}$ with the same probability.
  \item \textbf{SE}: The sparse embedding (SE) method corresponds to Algorithm \ref{alg:SE}. In this method, the position of nonzero entry in each column is randomly chosen.
  \item \textbf{S-SSE}: Stable sparse subspace embedding (S-SSE) corresponds to Algorithm \ref{alg:1}.
\end{itemize}

We performed all the experiments on the PC machine with dual Intel core i7-4790 CPUs at 3.60GHz processor and 8 GB of RAM.
 \subsection{Data separability comparison after dimensionality reduction}
In order to verify our theoretical analysis in section \ref{sec:stability}, we performed experiments on a synthetic dataset which consists of four classes. Each class contained 1000 samples with a dimension of 100. Features in four classes were drawn from normal distribution having variance 0.5 and mean 0, 2, 4 and 6, respectively. The dimension was reduced by using the SE and the S-SSE. Figure \ref{fig:4lei_data} shows data distribution when each class containing 100 samples with a dimension of 2.

\begin{figure}[!h]
\centering
\includegraphics[width=0.5\textwidth]{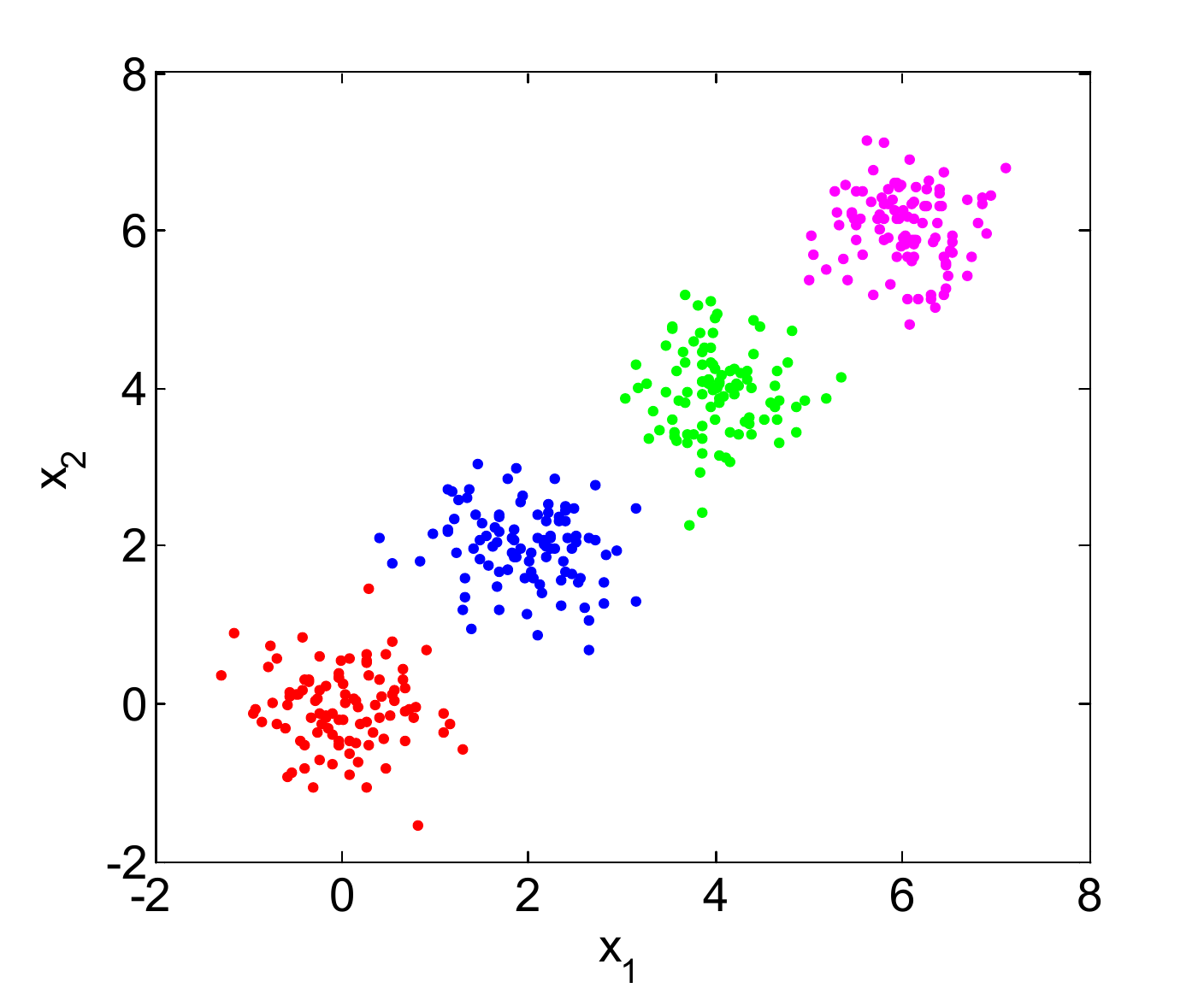} 
 \caption{Two-dimensional separable dataset with four classes. Each class contains 100 data. Data in four classes were drawn from normal distribution having variance 0.5 and mean 0, 2, 4 and 6, respectively. $x$-axis and $y$-axis are the first and the second feature of the data.}\label{fig:4lei_data}
\end{figure}

We adopt separability of dimensionality reduced data to measure the feature extraction performance of the SE and the S-SSE. The separability metric is the ratio of between-class distance and within-class distance, i.e.
\begin{equation*}
J=\frac{\mathrm{tr}(S_b)}{\mathrm{tr}(S_w)},
\end{equation*}
where $S_w=\sum_{i=1}^c \mathbf{P}_i\frac{1}{N_i}\sum_{j=1}^{N_i} (\mathbf{x}_j^{(i)}-\mathbf{s}_i)(\mathbf{x}_j^{(i)}-\mathbf{s}_i)^\top$ is the within-class dispersion matrix, $S_b=\sum_{i=1}^c \mathbf{P}_i (\mathbf{s}_i-\mathbf{s})(\mathbf{s}_i-\mathbf{s})^\top$ is the between-class dispersion matrix, $c$ is the number of classes, $\mathbf{P}_i$ is the priori probability of the $i$-th class, $N_i$ is the number of samples contained in the $i$-th class, $\mathbf{x}_j^{(i)}$ is the $j$-th sample in the $i$-th class, $\mathbf{s}_i$ is the mean of samples in the $i$-th class, $\mathbf{s}$ is the mean of all samples. The larger the $J$ is, the better the separability is. 

\begin{figure}[!h]
\centering
\subfigure[]{
    \label{fig:kefen_mean}
  \includegraphics[width=0.48\textwidth]{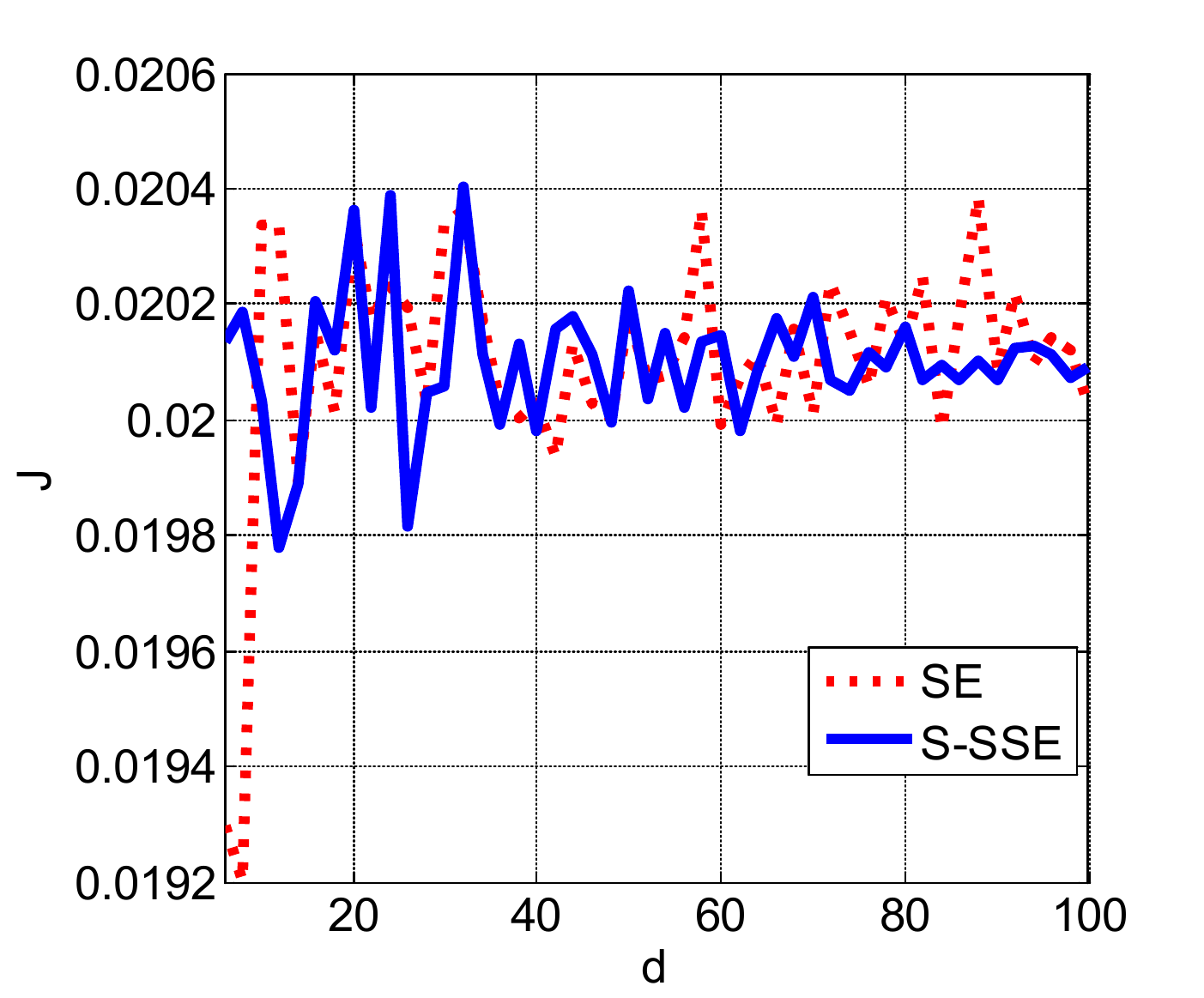}} 
  \subfigure[]{
    \label{fig:kefen_var}
  \includegraphics[width=0.48\textwidth]{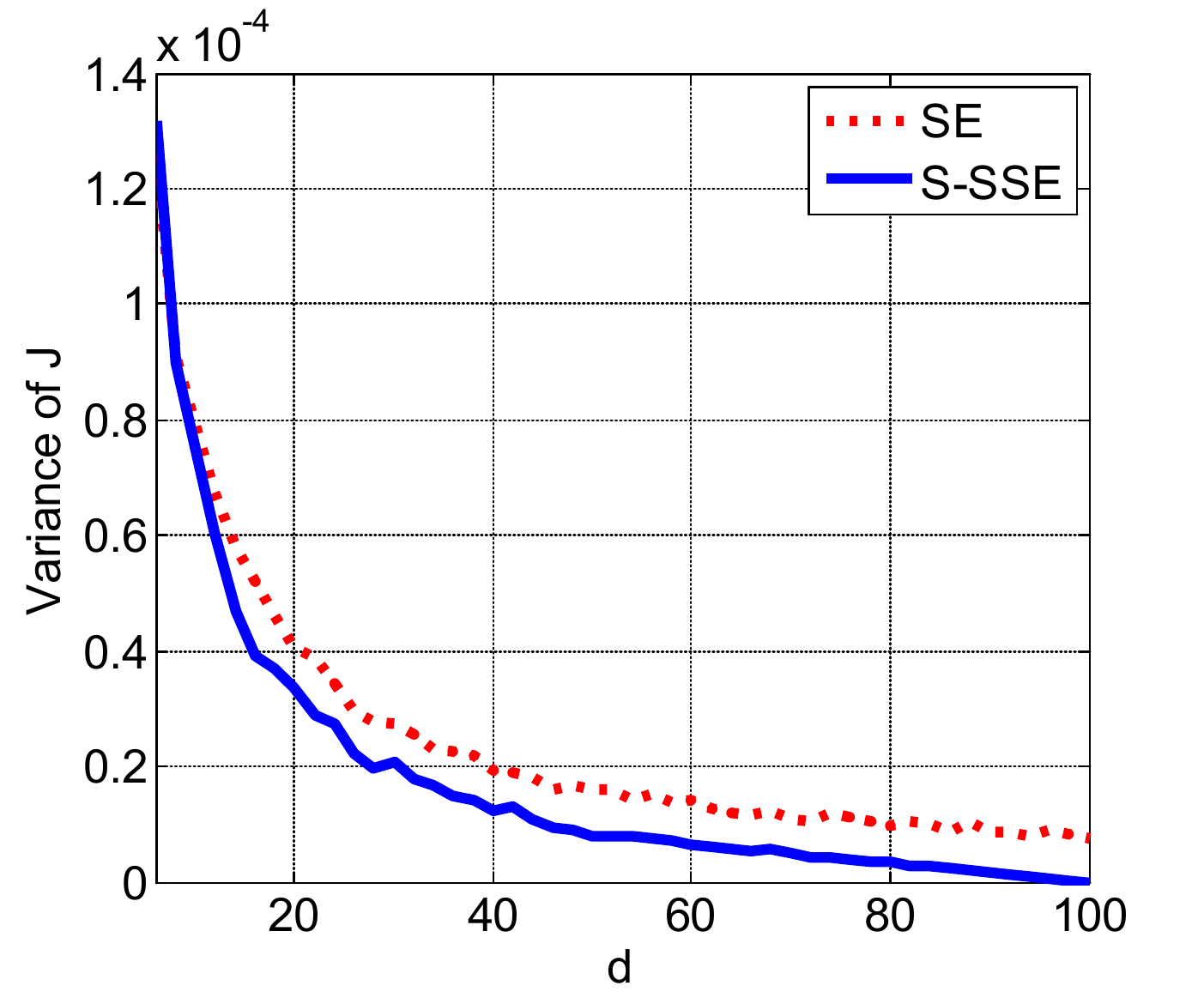}} 
  \caption{(a) Separability comparison. $y$-axis is the separability measurement $J$. (b) Variance of separability comparison. $y$-axis is the variance of $J$. Original dimension is 100. $x$-axis is the reduced dimension $d$.}
  \label{fig:separability}
\end{figure}

In order to obtain unbiased results, we ran programs 1000 times independently for each dimension $d$ and computed mean and variance of $J$. Fig. \ref{fig:separability} gives the experimental results. Fig. \ref{fig:kefen_mean} illustrates that values of $J$ are fluctuated around $0.0201$ for different $d$, yet the range of fluctuation is small, about $\pm 0.0003$, which illustrates that the separability of the data is still good after dimensionality reduction by using the SE and the S-SSE. We can also observe that values of $J$ at some $d$ are larger than that at $d=100$, which indicates that feature extraction may improve the separability of the data. With the increasing of $d$, the fluctuation of $J$ decreases, and more and more close to the value of $J$ at $d=100$, which indicates that the separability of dimensionality reduced data becomes stabler as $d$ increases. The fluctuation of $J$ for the S-SSE is smaller than that for the SE, which indicates that the separability of the data dimensionality reduced by using the S-SSE method is stabler than that by using the SE method. Fig. \ref{fig:kefen_var} shows that the variances of $J$ for the SE and the S-SSE both decrease as the dimension increases, which indicates that the larger the reduced dimension is, the stabler the data separability is. For all the $d$, the variances of $J$ for the S-SSE are all smaller than that for the SE, which indicates that the S-SSE is stabler than the SE. Overall, the S-SSE is able to maintain data separability as the SE, but the S-SSE is stabler than the SE, because the random matrix constructed by the S-SSE method is stabler.

\subsection{Euclidean distance preservation comparison}\label{sec:exp_preserve}
\subsubsection{The variation of relative error $\epsilon$ with $d$}
In order to compare the preservation of Euclidean distance for the SE and the S-SSE, we conducted experiments on data with 1000 dimensions to measure the variation of relative error $\epsilon=|\frac{\|xR\|_2}{\|x\|_2}-1|$ with reduced dimension $d$. Entries in the data were randomly chosen from $[0,1]$ or standard normal distribution with mean 0 and variance 1, because real-world datasets are usually normalized to these two distributions before training. The dimension was reduced from 1000 to $d$, where $d$ was set as $20$ to $200$ with interval $20$. For every $d$, experiments were performed 100 times independently and the mean of $\epsilon$ was calculated to obtain unbiased results. Fig. \ref{fig:distance} gives the experimental results. It can be shown from Fig. \ref{fig:distance} that $\epsilon$ decreases with the increasing of $d$. This is consistent with reality. Moreover, the relative error of the S-SSE is less than that of the SE in most cases. Therefore, the S-SSE can preserve the Euclidean distance better than the SE after dimensionality reduction.
\begin{figure}[!ht]
\centering
  \subfigure[]{
    \label{fig:norm} 
    \includegraphics[width=0.48\textwidth]{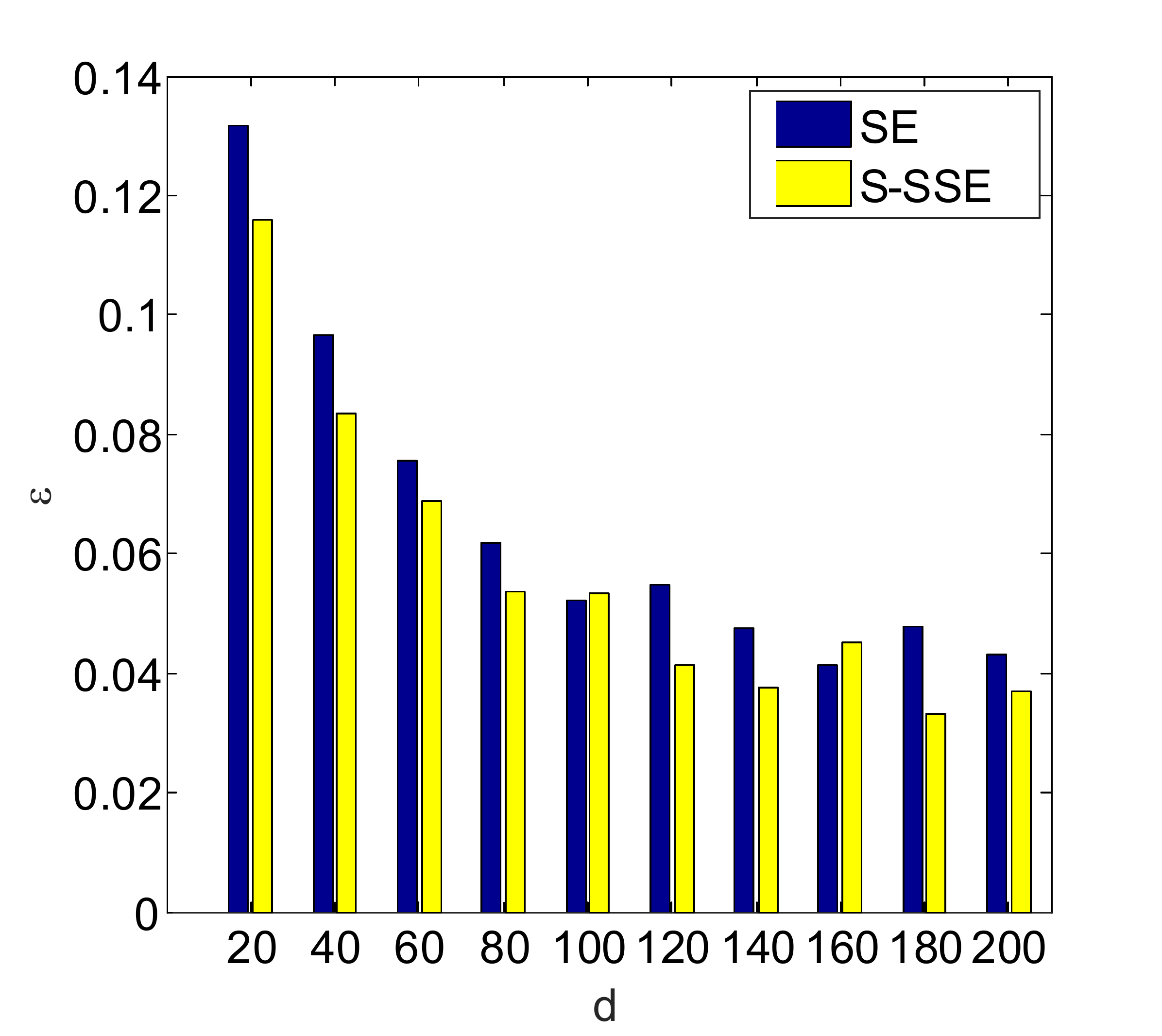}} 
  \subfigure[]{
    \label{fig:0-1} 
    \includegraphics[width=0.48\textwidth]{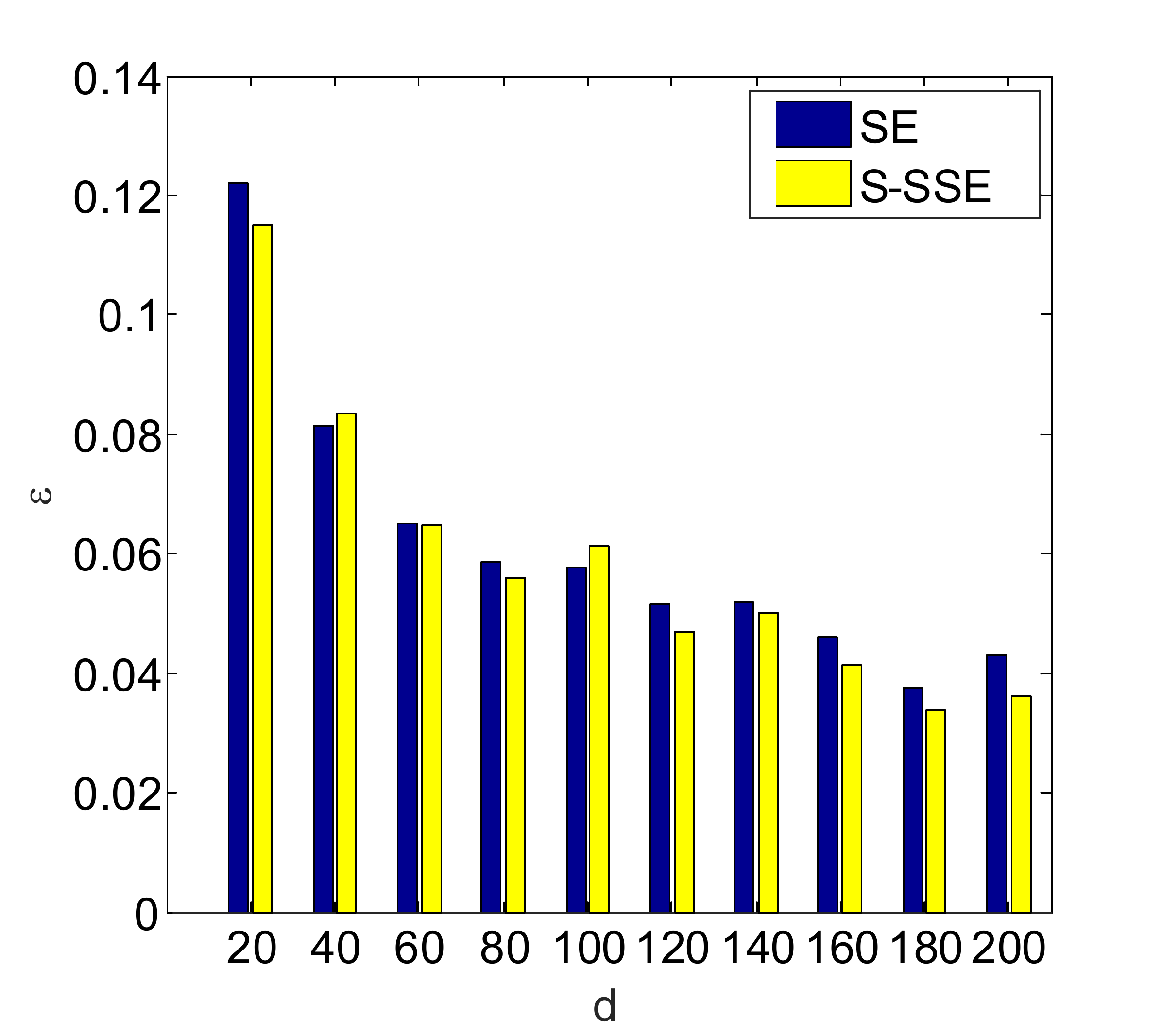}}
  \caption{Compare the variety of relative error $\epsilon$ with $d$. $\epsilon=|\frac{\|R \mathbf x\|_2}{\|\mathbf x\|_2}-1|$. Original data contain 1000 features with each feature randomly generated from $[0,1]$ and standard normal distribution with mean 0 and variance 1 for (a) and (b), respectively. $d$ is the reduced dimension, and $\epsilon$ is the mean of relative errors of 100 trials. }
 \label{fig:distance}
 \end{figure}
 \subsubsection{The variation of distance preservation probability $p$ with $d$} \label{sec:1-delta_d}
 In order to verify the conclusion of Theorem \ref{thm:preserve}, and further compare the preservation of Euclidean distance after dimensionality reduction by the SE and the S-SSE, experiments were conducted on one synthetic dataset and two benchmark datasets. We calculate frequency of $\|R\mathbf x\|_2$falling within the interval $[(1-\epsilon)\|\mathbf x\|_2, (1+\epsilon)\|\mathbf x\|_2]$. Experiments were run 10,000 times independently and computed the mean of the frequencies as the distance preservation probability. For convenience, we denote this probability value as $p$, i.e. $p:=\mathbb{P}((1-\epsilon)\|\mathbf{x}\|_2\leq \|R\mathbf{x}\|_2\leq(1+\epsilon)\|\mathbf{x}\|_2)$, which is related to $\epsilon$ and $R$. If $\epsilon$ is fixed at a constant, then the larger $p$ is, the better the Euclidean distance preservation of $R$ is.

The synthetic dataset contains 1000 samples with dimension 200, which were uniformly and randomly generated from interval $[0,1]$. The benchmark datasets are DNA and MADELON, whose information is listed in Table \ref{tab:data}. To measure the variation of distance preservation probability $p$ with $d$, $\epsilon$ was fixed at $\epsilon=0.1\in (0,0.5)$, and $d$ was set as $20$ to $200$ with interval $20$. Fig. \ref{fig:d-delta} gives the experimental results. Fig. \ref{fig:d-delta} illustrates that as $d$ increases, $p$ also increases gradually approaching to 1, which indicates that the distance preservation probability increases with the increasing of reduced dimension. With regard to the same $d$, the value of $p$ for the S-SSE is larger than that for the SE, which indicates that the S-SSE method can better preserve Euclidean distance approximation.
\begin{figure}[!h]
\centering
  \subfigure[Synthetic Dataset]{
    \label{fig:d_1-delta} 
    \includegraphics[width=0.31\textwidth]{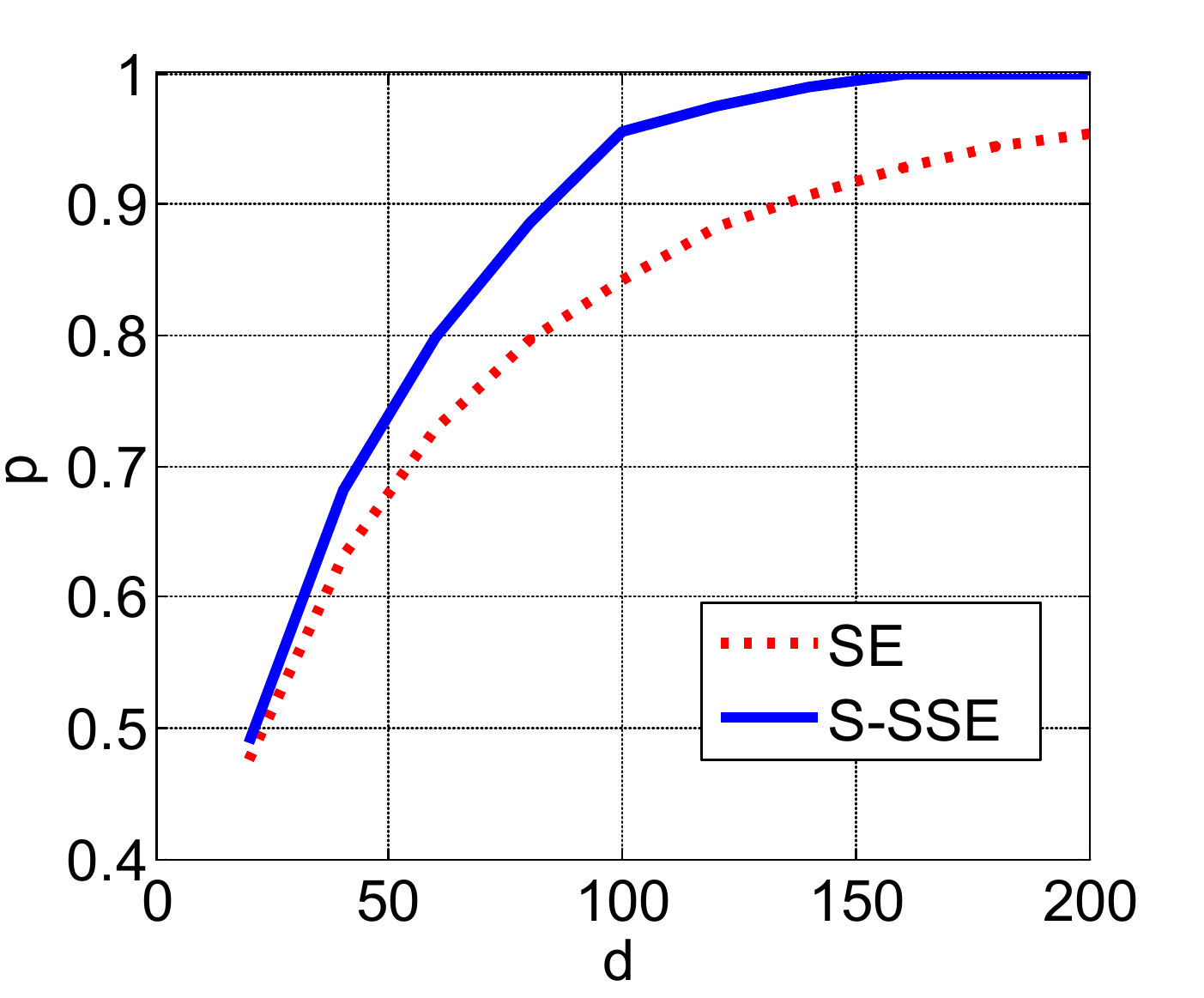}} 
  \subfigure[DNA]{
    \label{fig:dna_d_1-delta} 
    \includegraphics[width=0.31\textwidth]{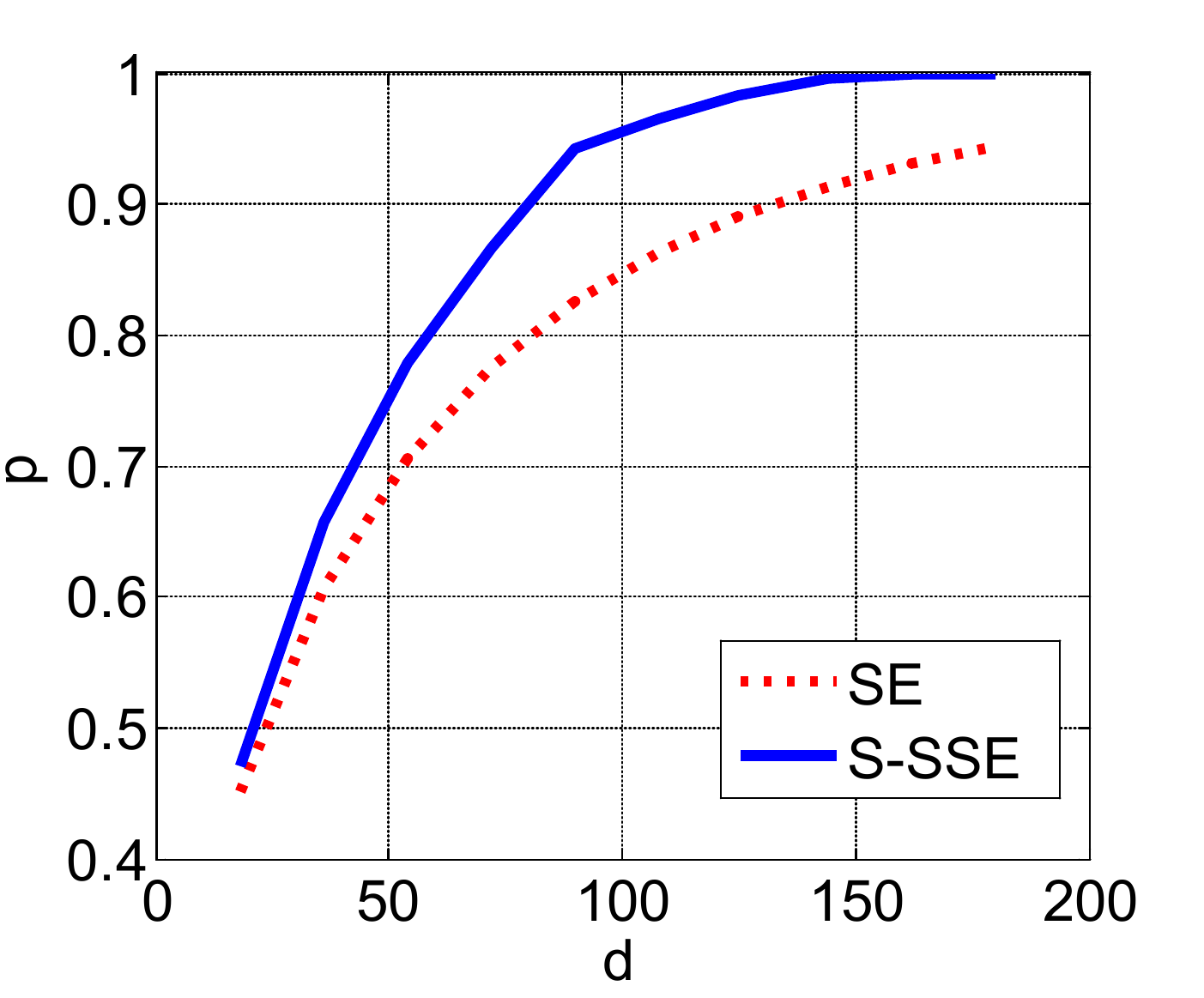}}
  \subfigure[MADELON]{
    \label{fig:MADELON} 
    \includegraphics[width=0.31\textwidth]{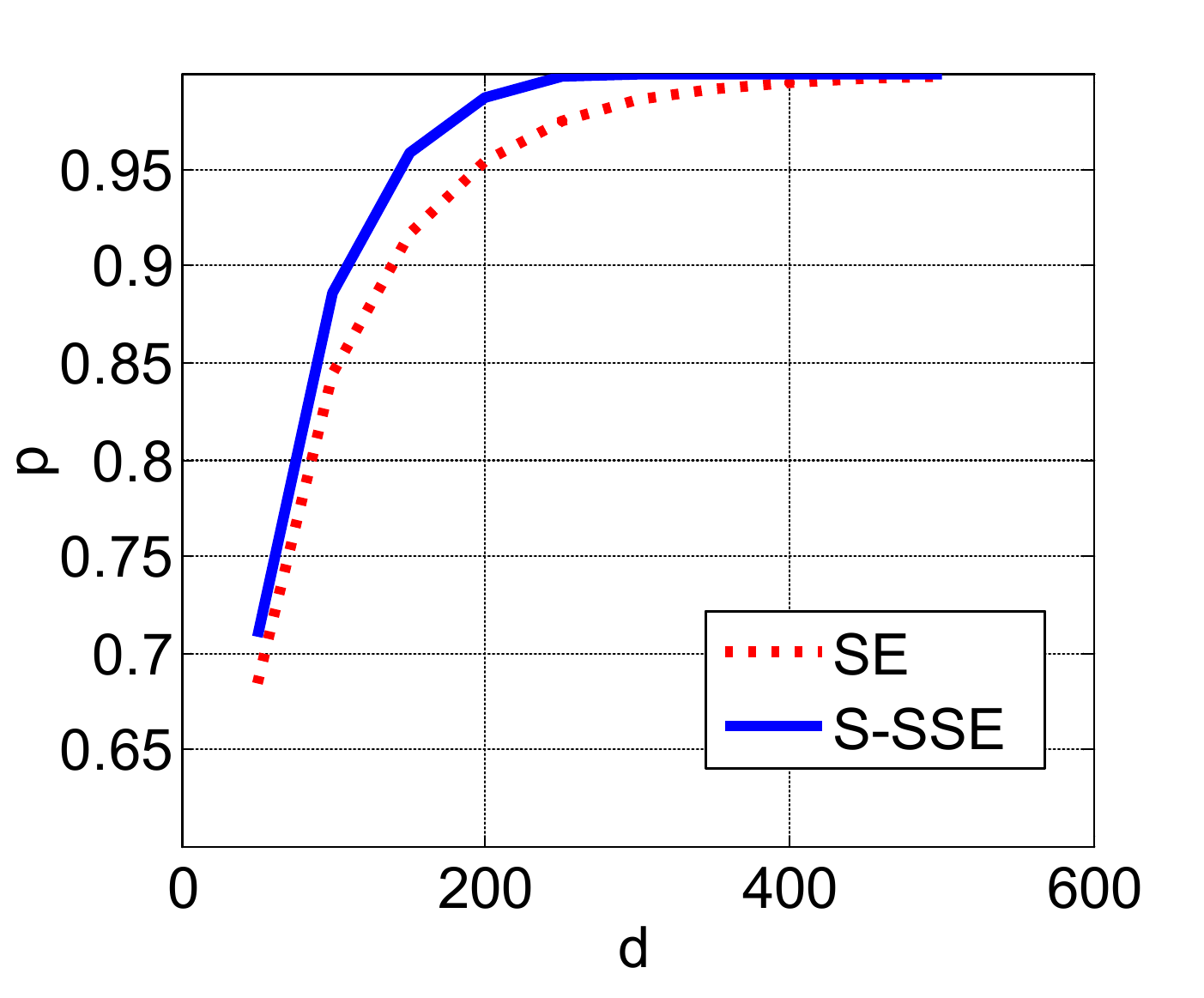}}
  \caption{The variation of $p$ with $d$ on the dimensionality reduced data generated by the SE and the S-SSE. $p$ is the distance preservation probability. Set $\epsilon=0.1$. (a), (b) and (c) are the results of synthetic dataset, DNA and MADELON datasets, respectively.}
 \label{fig:d-delta}
 \end{figure}

 \subsubsection{The variation of distance preservation probability $p$ with $\epsilon$}
 To measure the relationship between distance preservation probability $p$ and relative error $\epsilon$, we fixed $d$ at $80$, $80$ and $100$ for synthetic dataset (the generation method is the same as that in subsection \ref{sec:1-delta_d}), DNA and MADELON, respectively. $\epsilon$ was set as $0.05$ to $0.5$ with interval $0.05$. The experiments were performed 10000 times independently and computed the mean of $p$ as the final results. Fig. \ref{fig:emsrong-delta} gives the experimental results. Fig. \ref{fig:emsrong-delta} shows that the values of $p$ gradually increase to 1 as $\epsilon$ increases, which indicates that with the enlarging of interval $[(1-\epsilon)\|\mathbf x\|_2, (1+\epsilon)\|\mathbf x\|_2]$, $p$ also increases, which is consistent with reality. The values of $1-p$ for the S-SSE are all smaller than 0.5, which indicates that the condition in Theorem \ref{thm:preserve} is reasonable. In addition, given the value of $\epsilon$, $p$ of the S-SSE method is larger than that of the SE method, which indicates that the probability of $\|R\mathbf x\|_2$ falling within the interval $[(1-\epsilon)\|\mathbf x\|_2, (1+\epsilon)\|\mathbf x\|_2]$ after dimension reduction by the S-SSE method is larger than that by the SE method, in other words, S-SSE method can better preserve Euclidean distance approximation.
  \begin{figure}[!h]
\centering
  \subfigure[Synthetic Dataset]{
    \label{fig:emsrong_1-delta} 
    \includegraphics[width=0.31\textwidth]{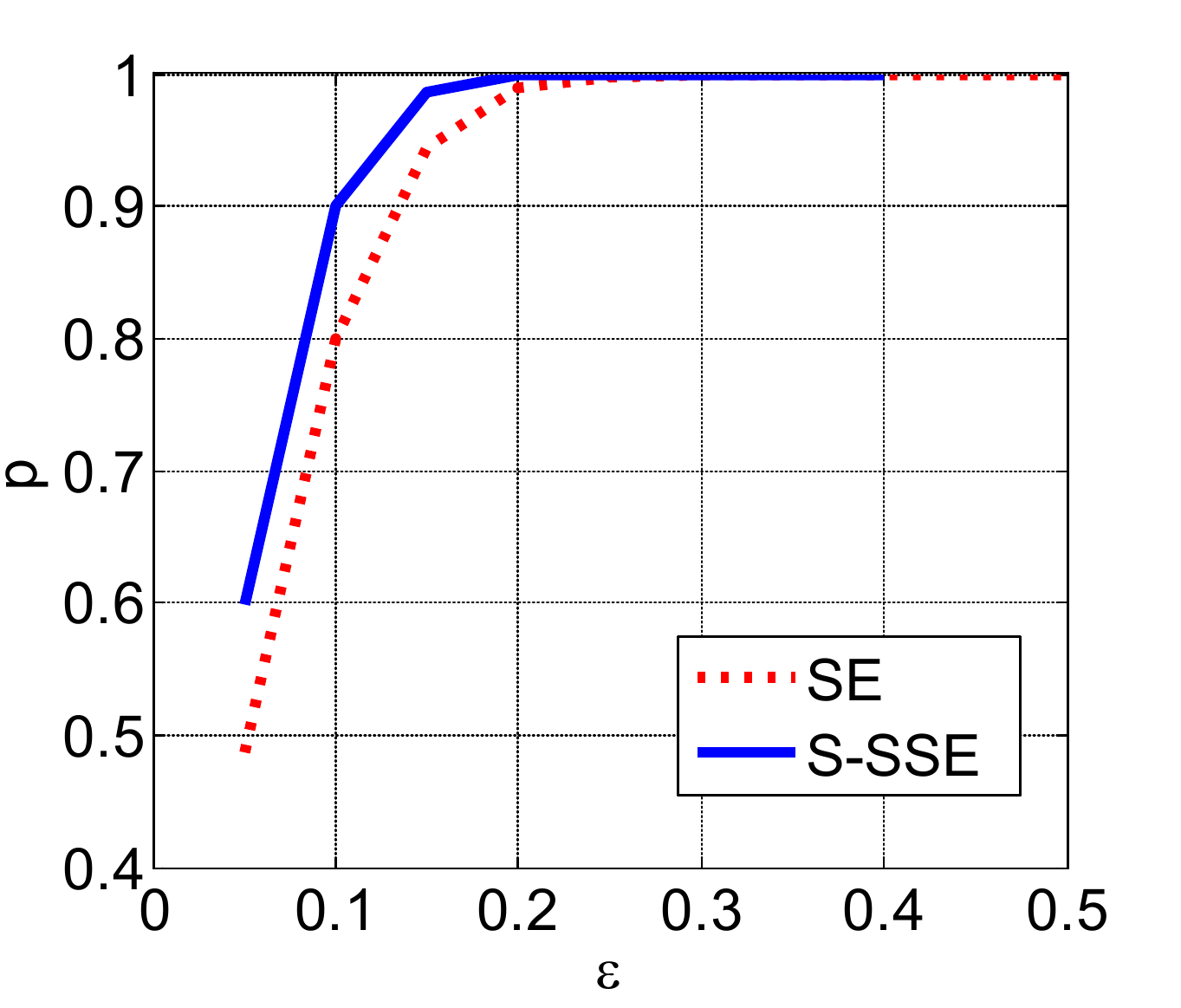}} 
  \subfigure[DNA]{
    \label{fig:dna_emsrong_1-delta} 
    \includegraphics[width=0.31\textwidth]{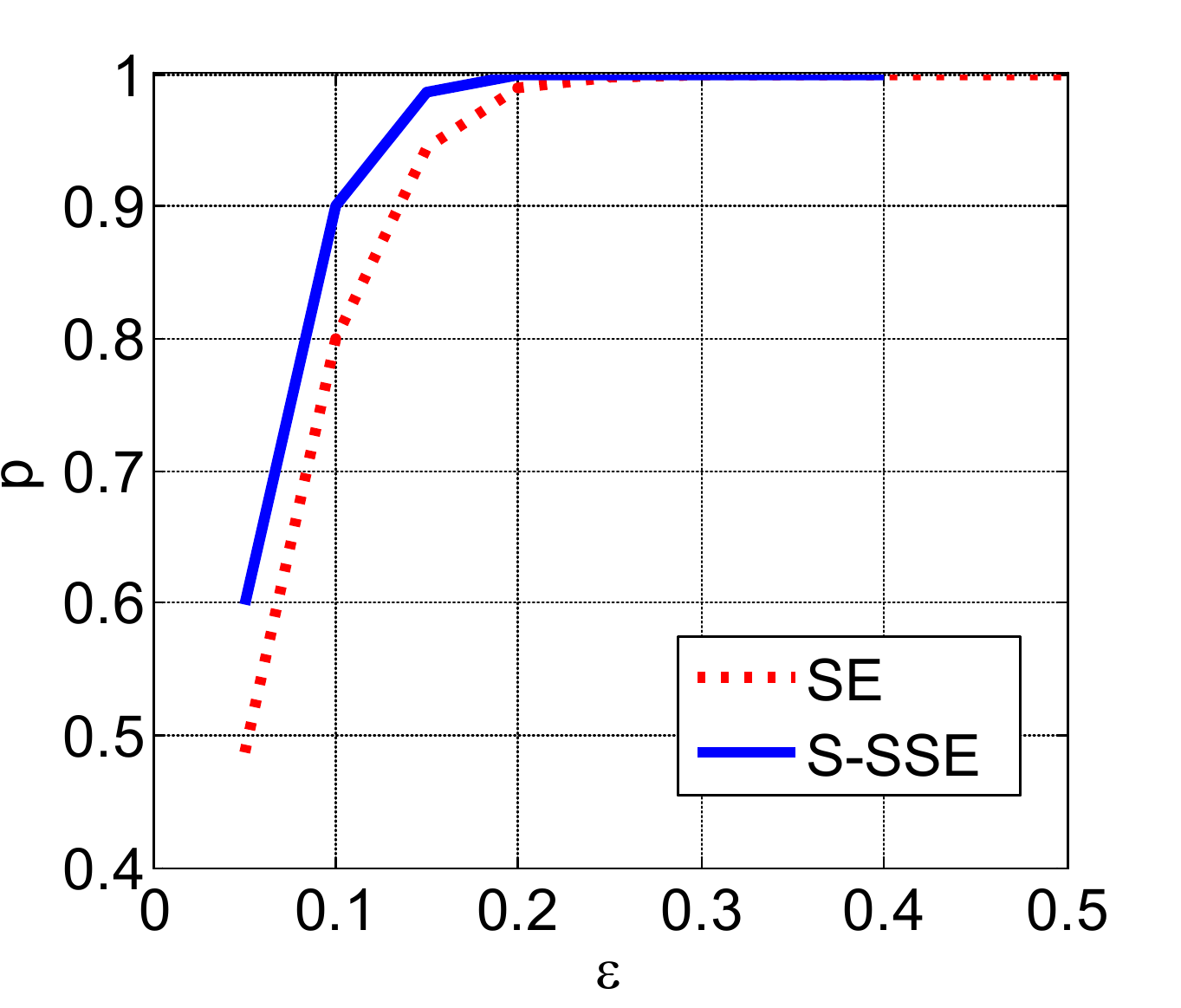}}
  \subfigure[MADELON]{
    \label{fig:madelon_emsrong_1-delta} 
    \includegraphics[width=0.31\textwidth]{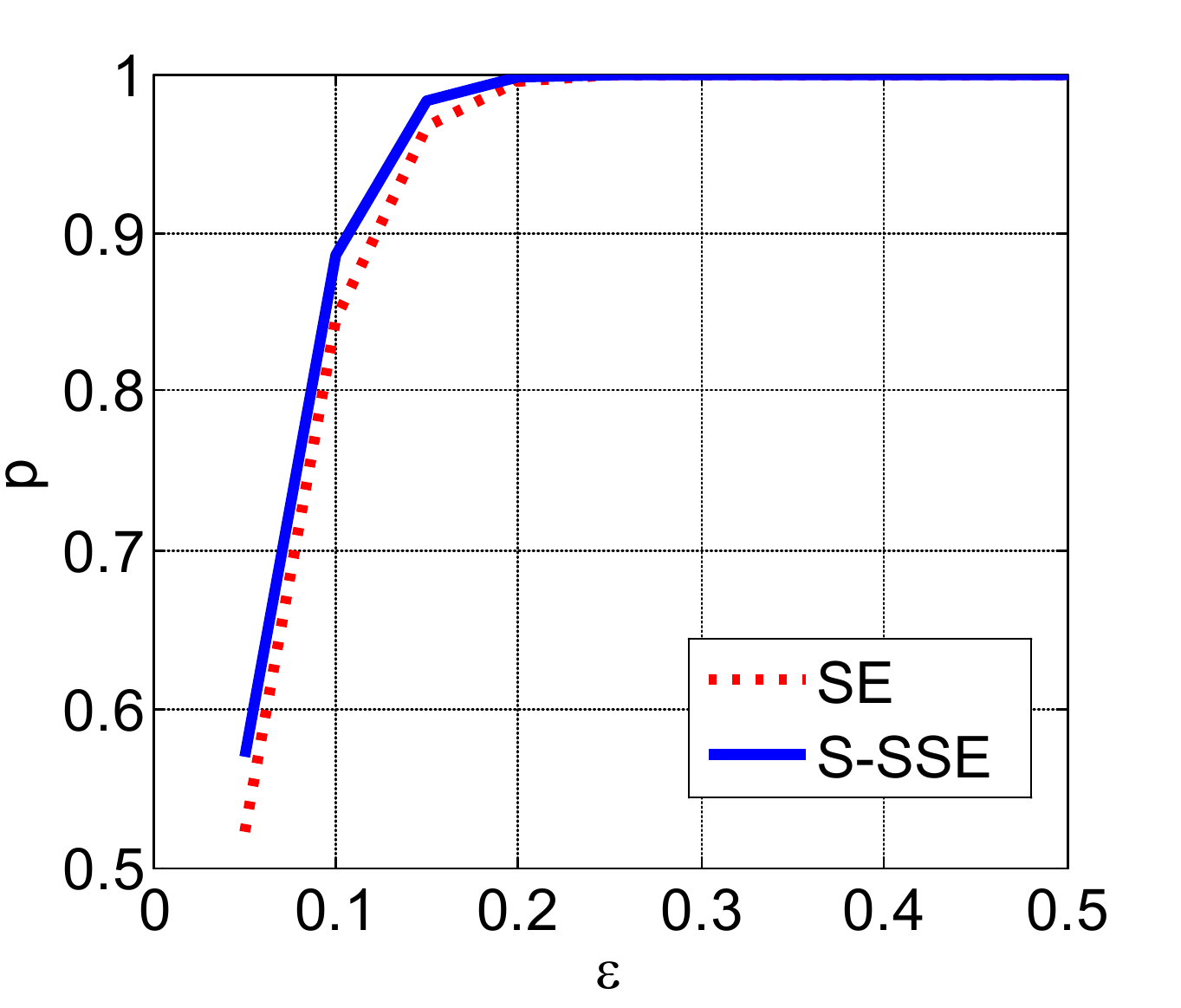}}
  \caption{The variation of distance preservation probability $p$ with $\epsilon$ on the dimensionality reduced data generated by the SE and the S-SSE. (a), (b) and (c) are the results of synthetic dataset, DNA and MADELON with $d=80, 80$ and $100$, respectively.}
 \label{fig:emsrong-delta}
 \end{figure}
 \subsection{$k$-means clustering experiments}
  Our S-SSE approach can be applied in various Euclidean distance based machine learning algorithms. In these algorithms, $k$-means clustering is one of the most widely used methods, but it is inefficient on dealing with high dimensional datasets. In order to evaluate the performance of the proposed feature extraction method applied in machine learning, this subsection uses the dimensionality reduced data onto the $k$-means clustering and compares the S-SSE against a few other prominent dimensionality reduction methods. For SPCA, we set the number of non-zero entries in each column of principal component directions matrix is 1 to compare the efficient of the SPCA, the SE and our S-SSE. The maximum number of iterations in SPCA is set 3000.  We can not get the results of SPCA within three days on GISETTE and SECTOR datasets. Thus, these results are not reported. The datasets can be downloaded from the LIBSVM website \footnote{\url{https://www.csie.ntu.edu.tw/~cjlin/libsvmtools/datasets/}}. Table \ref{tab:data} lists the information of the datasets, including the number of samples, features and classes.
 \begin{table}[!h]
  \centering
  \caption{Information of datasets used in the experiments}
  \begin{small}
  \begin{tabular}{lccc}
  \toprule
  Datasets&\#INSTANCE&\#FEATURES&\#CLASSES\\
  \midrule
  DNA&3186&180&3\\
  USPS&9298&256&10\\
  MADELON&2000&500&2\\
  MNIST&60000&780&10\\
  GISETTE&7000&5000&2\\
  SECTOR&9619&55197&105\\
  \bottomrule
\end{tabular}
\end{small}
\label{tab:data}
\end{table}

In order to compare the effect of feature extraction algorithms the SPCA, the DE, the SE, the SPCA and the S-SSE, we ran standard $k$-means clustering algorithm after dimensionality reduction. We also compare all these algorithms against the standard $k$-means clustering algorithm on the full dimensional datasets. In experiments, Cai's Litekmeans package \footnote{\url{http://www.zjucadcg.cn/dengcai/Data/Clustering.html}} performs very well, hence we employed Cai's package in our experiments. The results in the figures are the mean of ten runs for each dataset. In each run, $k$-means clustering repeats twenty times, each with a new set of initial centroids, and returns the best one as the clustering output, i.e. in MATLAB, we ran the following command:  \emph{litekmeans(X, $k$, 'Replicates', 20)}.
\subsubsection{Evaluation methodology}
To measure the quality of all the methods, we reported the clustering accuracy \cite{fahad2014survey}, e.g. $accuracy=0.9$ implies that $90\%$ of the points are assigned the ``correct cluster".
We also reported the running time (in seconds) of constructing the matrix $R$ and computing the multiplication $RX^\top$ for all the compared algorithms. All the reported results correspond to the average values of 10 independent runs.

\subsubsection{Results}

Experimental results are shown in Figs. \ref{fig:acc} - \ref{fig:time_XR}. $x$-axis is compression factor, i.e. the ratio of the number of features after reduction and the number of original features, for instance, $compression~factor=0.3$ indicates that we extract $30\%$ of original features. For SECTOR, the maximum compression factor is set as $0.4$ because its dimension is so extremely high that training it consumes excessive memory.
  \begin{figure}[!h]
\centering
 \subfigure[DNA]{
    \label{fig:dna_acc} 
    \includegraphics[width=0.31\textwidth]{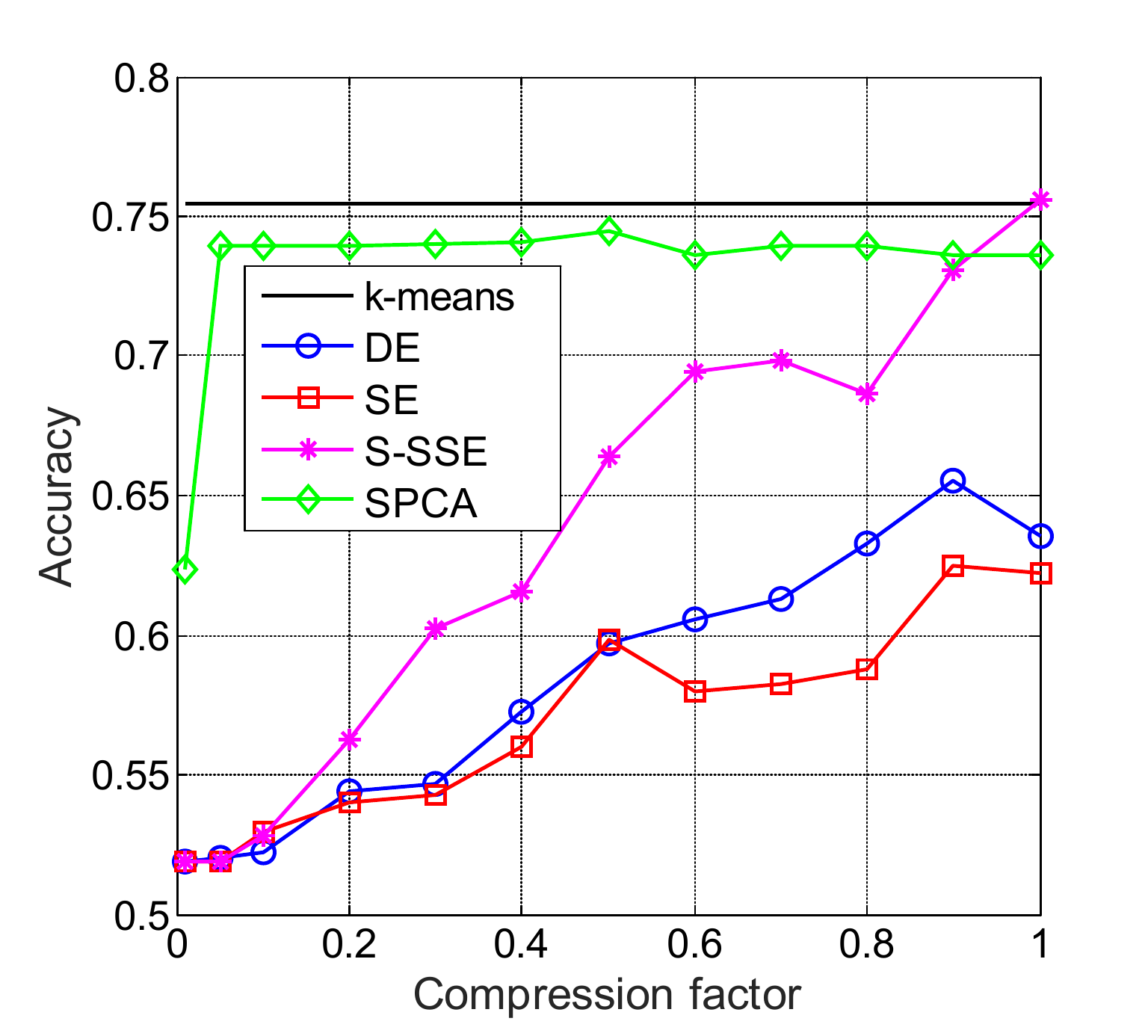}} 
 \subfigure[USPS]{
    \label{fig:usps_acc} 
    \includegraphics[width=0.32\textwidth]{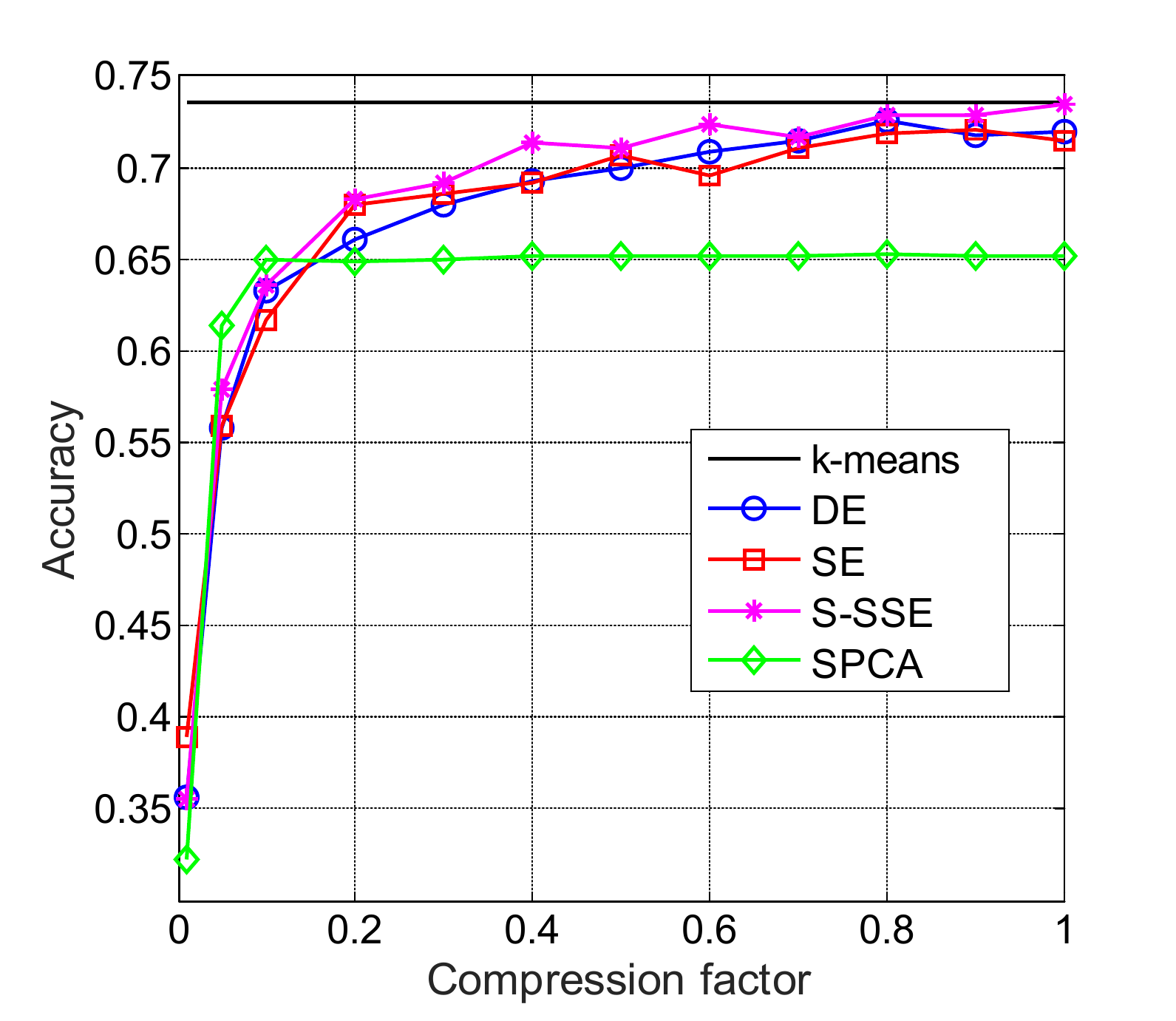}}
 \subfigure[MADELON]{
    \label{fig:mushrooms_acc} 
    \includegraphics[width=0.31\textwidth]{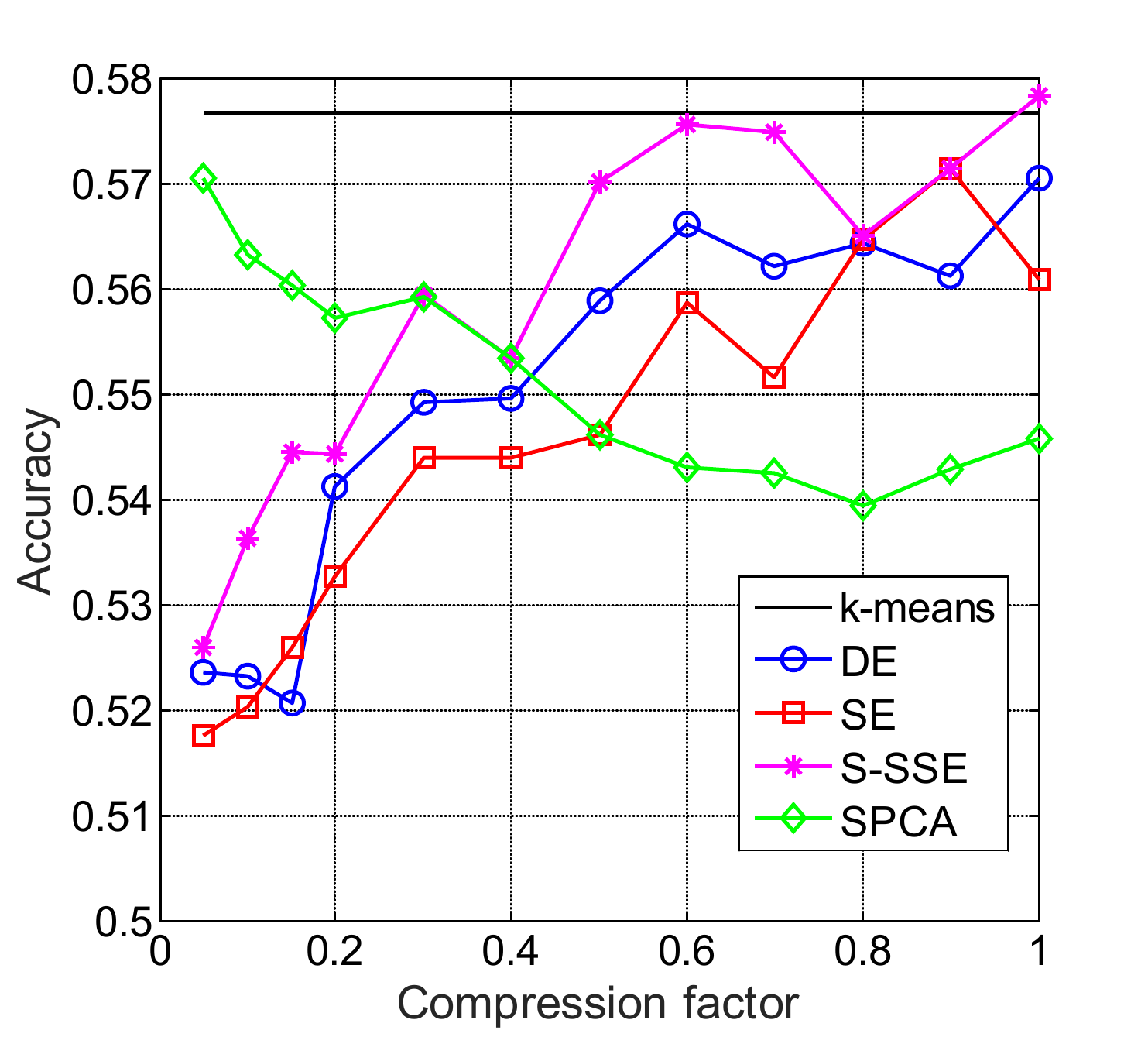}}
  \vfill
 \subfigure[MNIST]{
    \label{fig:mnist_acc} 
    \includegraphics[width=0.31\textwidth]{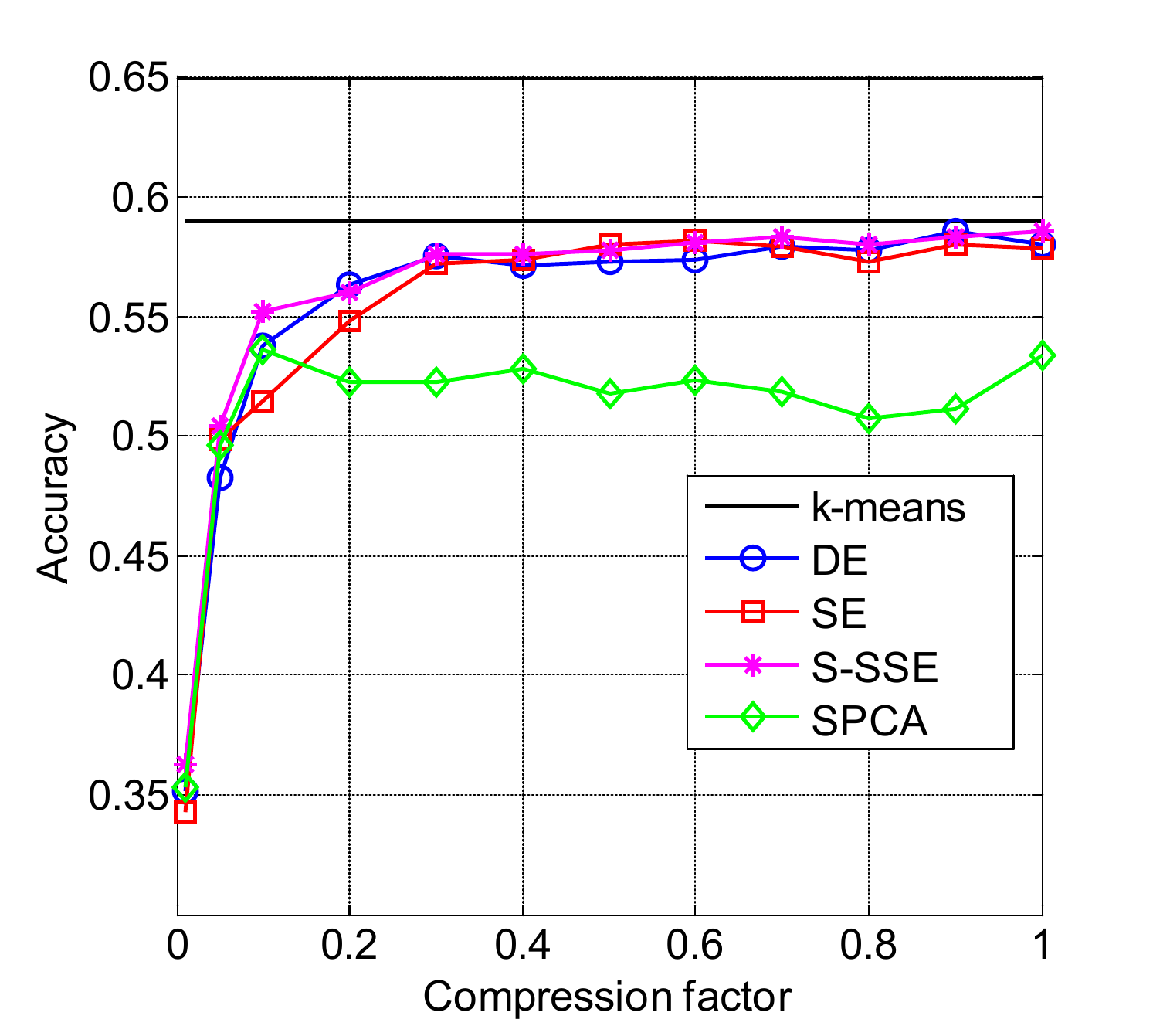}}
 \subfigure[GISETTE]{
    \label{fig:gisette_acc} 
    \includegraphics[width=0.31\textwidth]{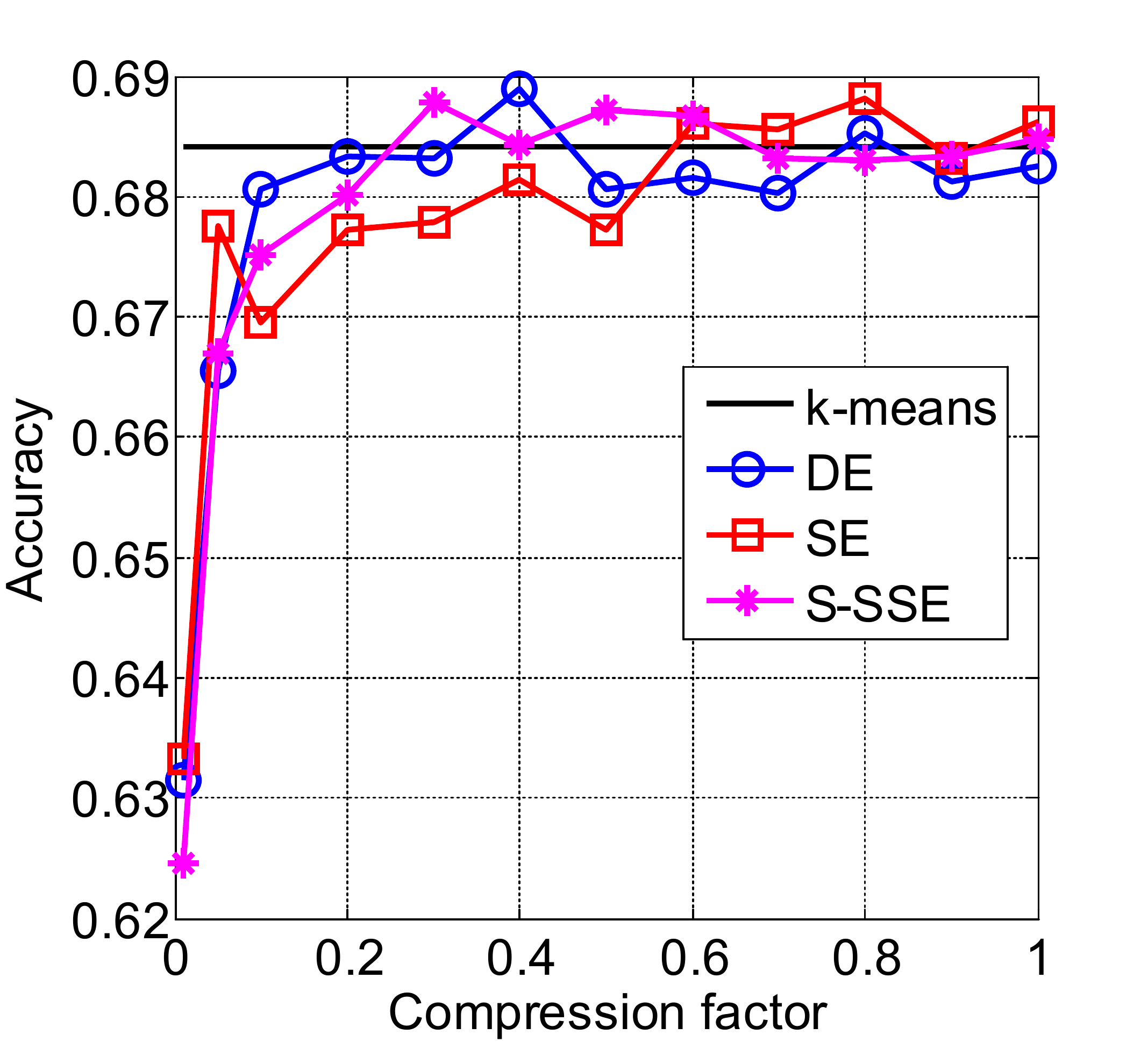}}
 \subfigure[SECTOR]{
    \label{fig:sector_acc} 
    \includegraphics[width=0.32\textwidth]{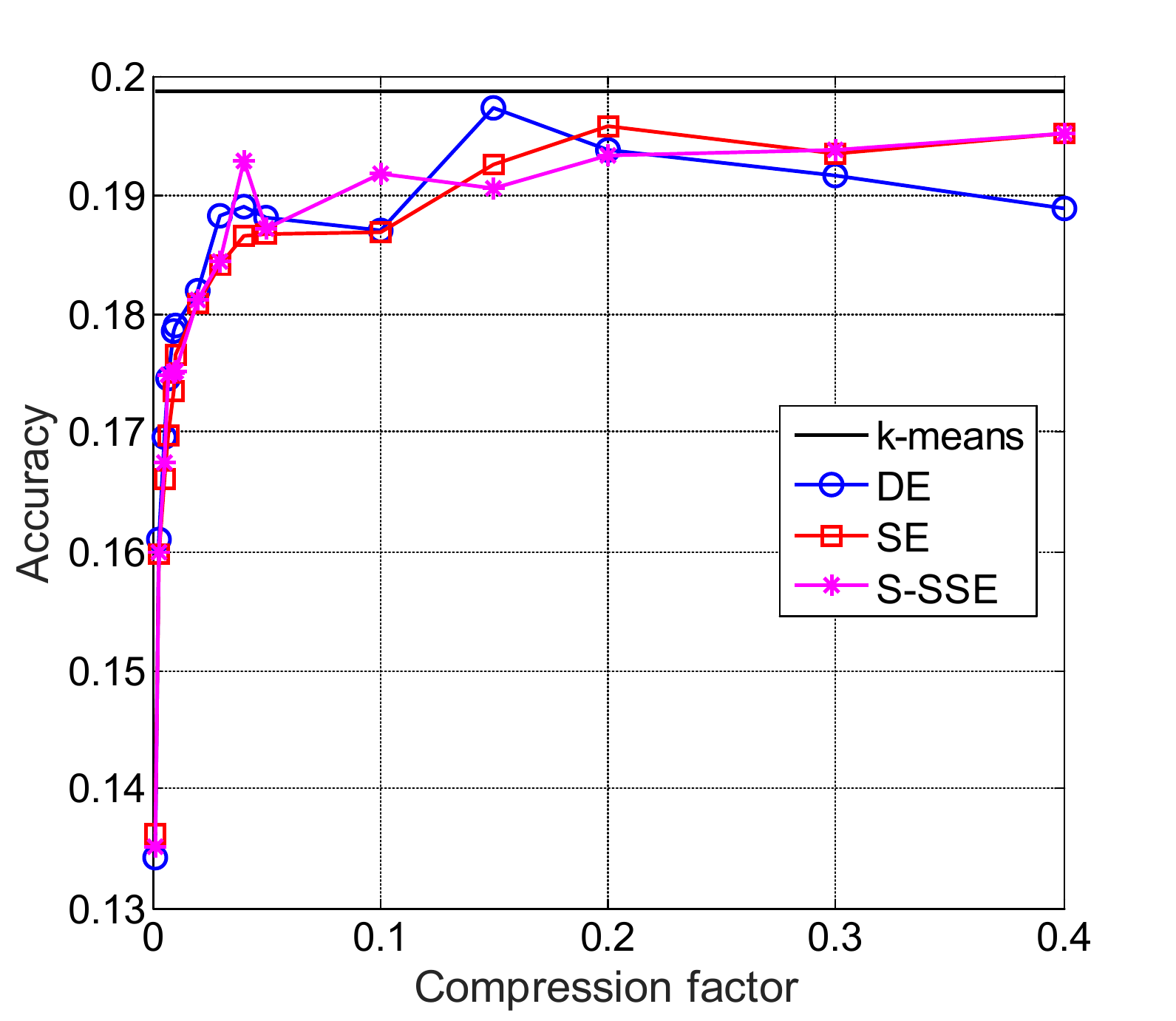}}
 \caption{Clustering accuracy for various dimensionality reduction methods on six real-world datasets. $x$-axis is the compression factor, i.e. the ratio of the number of features after reduction and the number of original features.}
 \label{fig:acc}
 \end{figure}

\begin{figure}[!h]
\centering
\subfigure[DNA]
    {\label{fig:dna_R} 
    \includegraphics[width=0.31\textwidth]{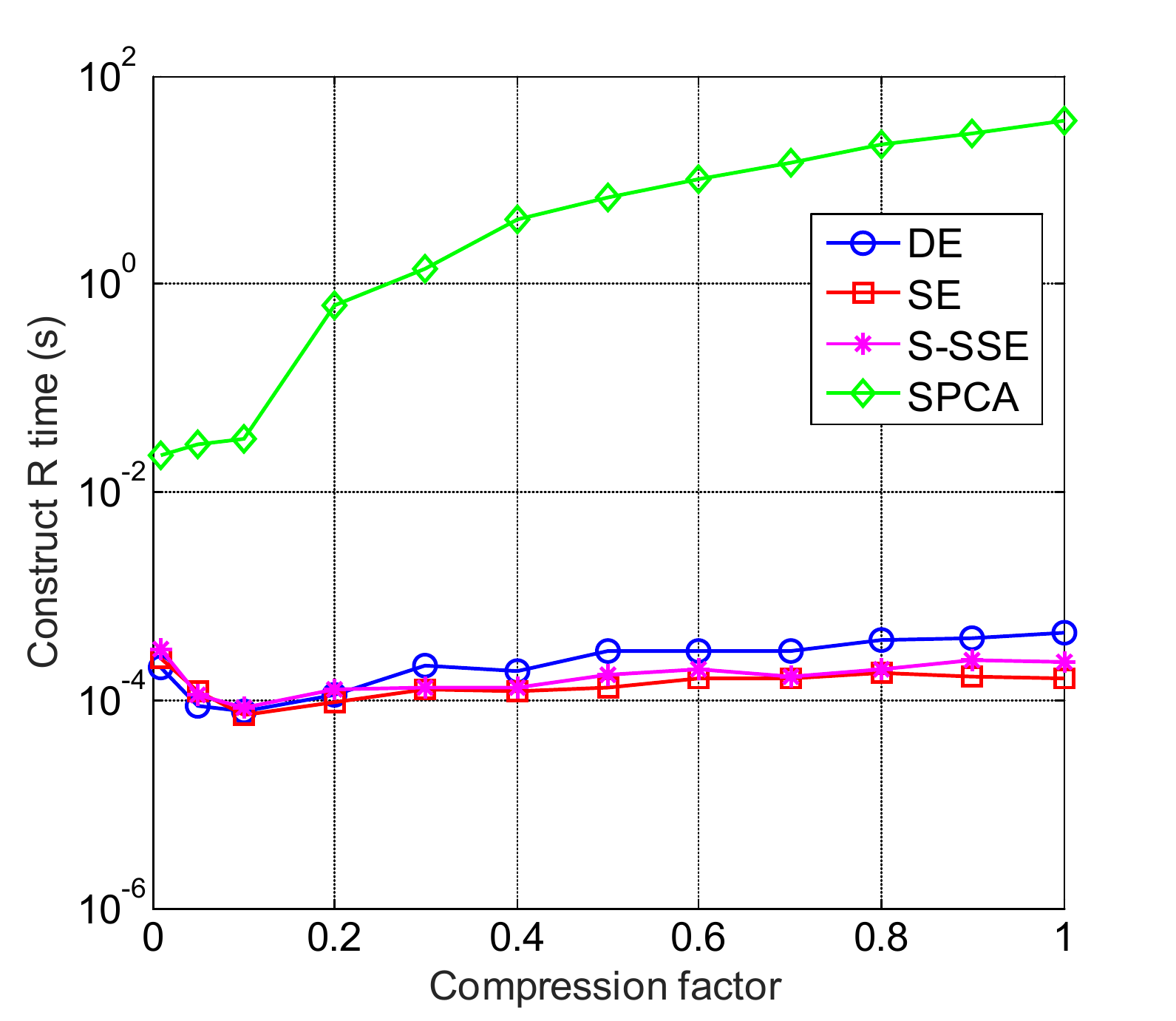}} 
\subfigure[USPS]{
    \label{fig:usps_R} 
    \includegraphics[width=0.31\textwidth]{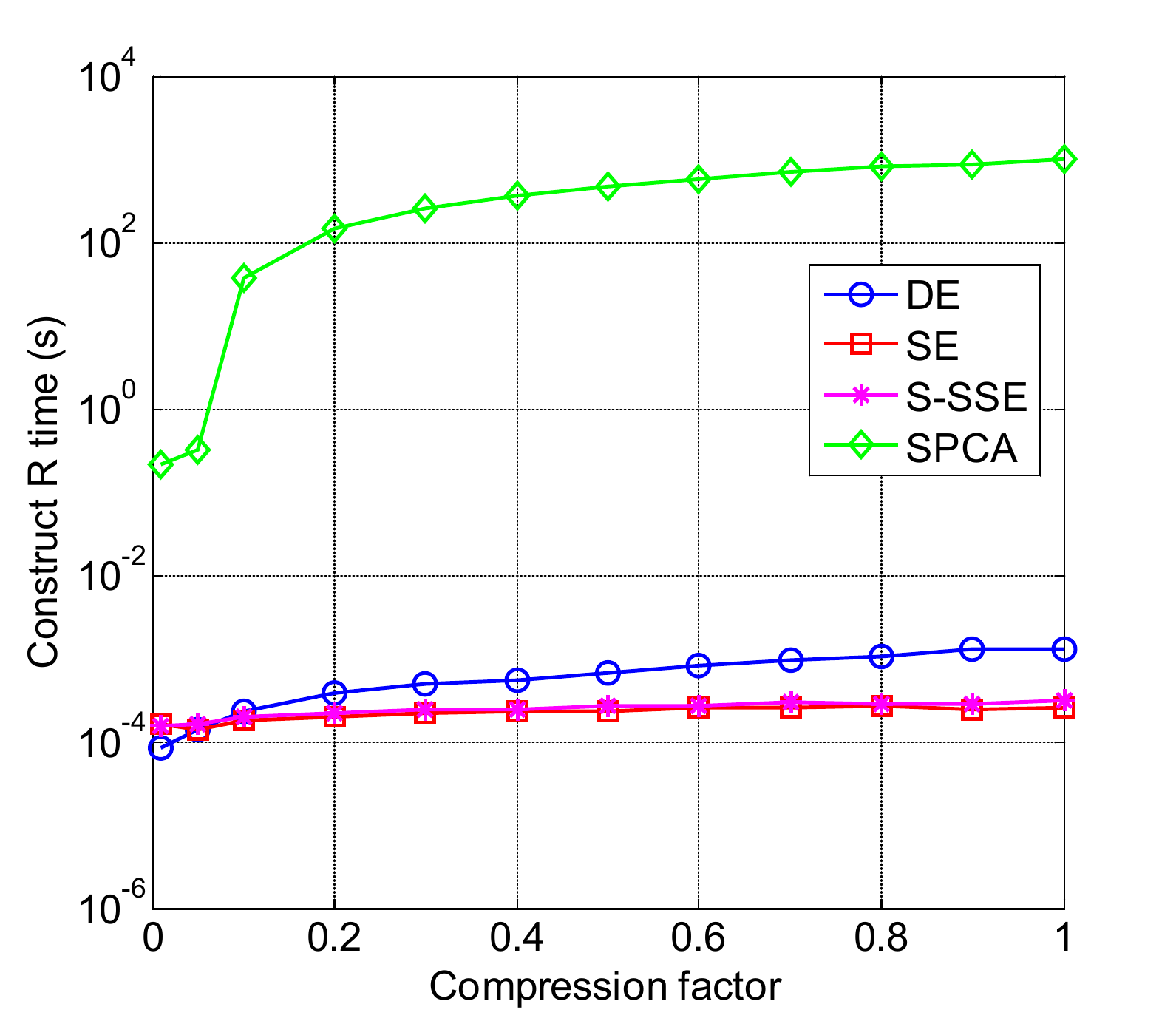}} 
\subfigure[MADELON]
    {\label{fig:madelon_R} 
    \includegraphics[width=0.31\textwidth]{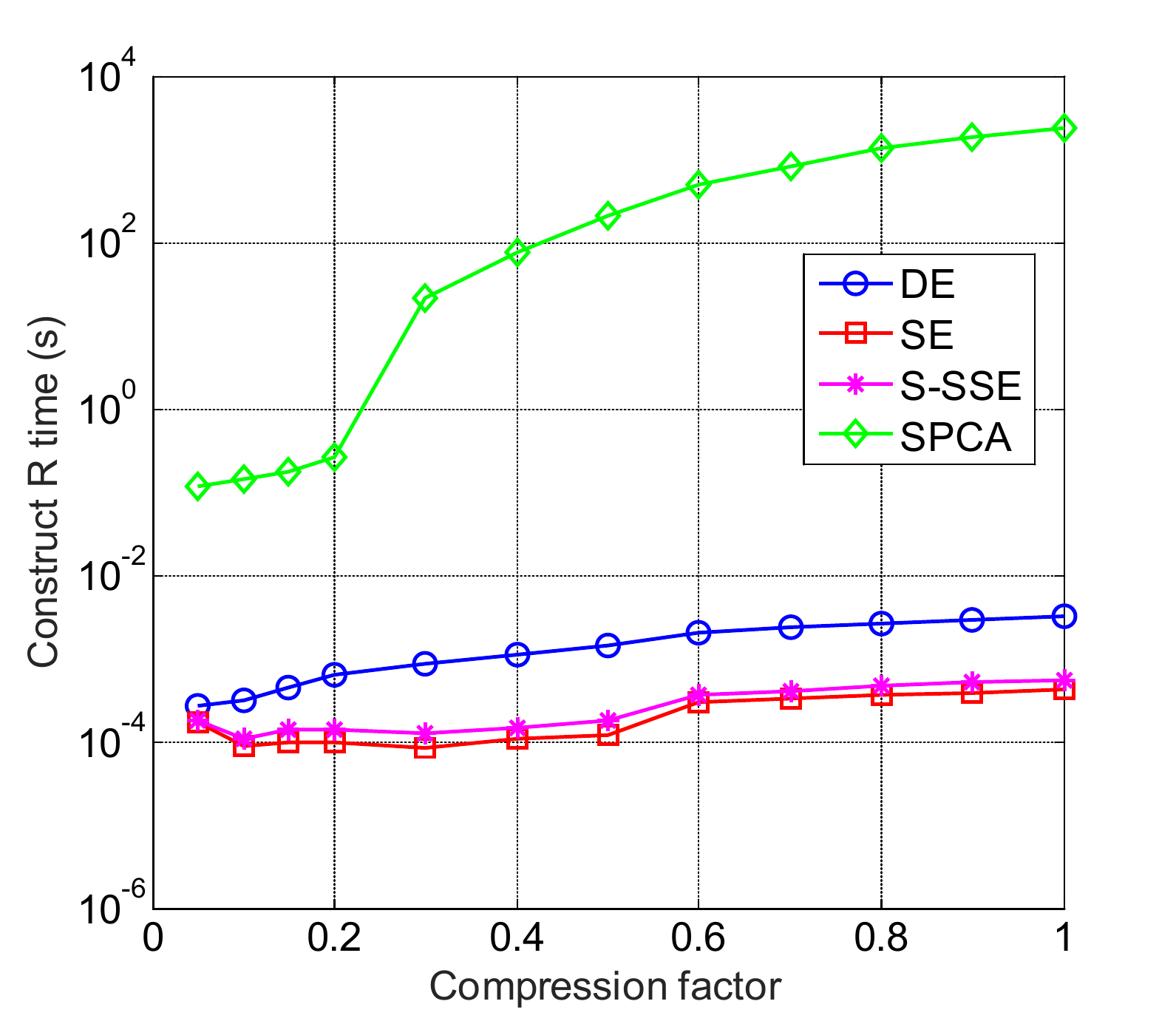}} 
 \vfill
 \subfigure[MNIST]{
    \label{fig:mnist_R} 
    \includegraphics[width=0.31\textwidth]{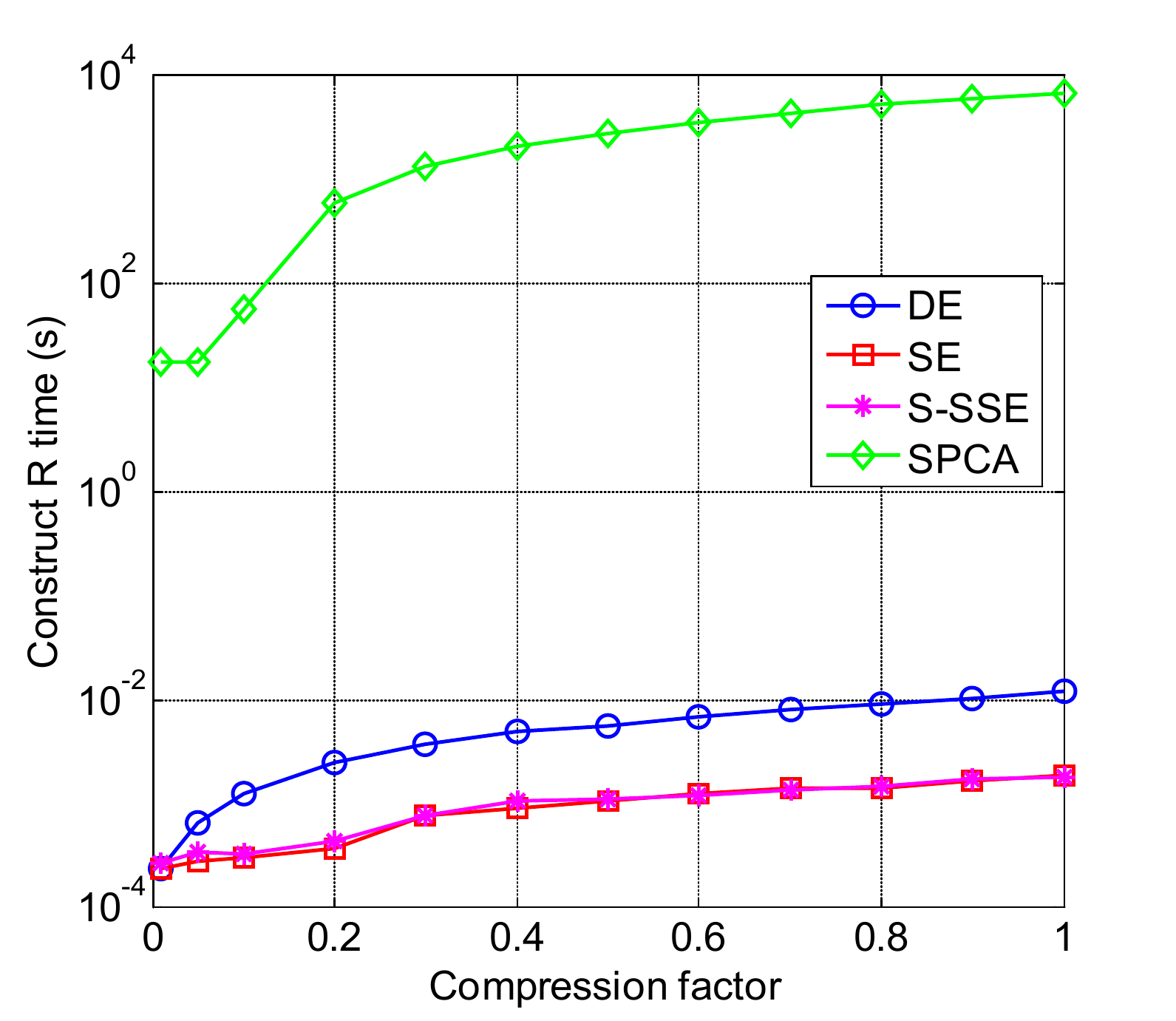}} 
 \subfigure[GISETTE]{
    \label{fig:gisette_R} 
    \includegraphics[width=0.31\textwidth]{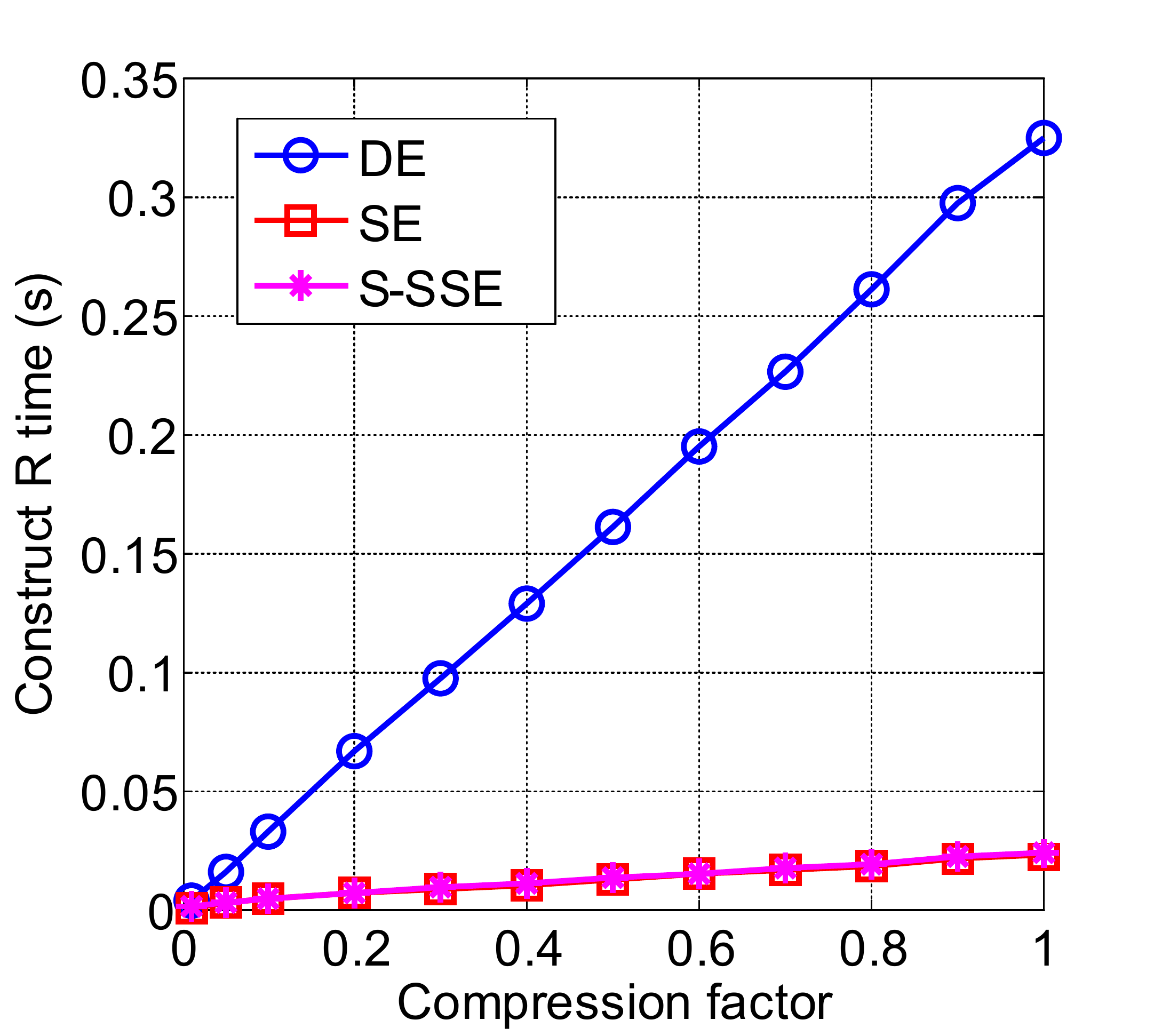}} 
 \subfigure[SECTOR]{
    \label{fig:sector_R} 
    \includegraphics[width=0.31\textwidth]{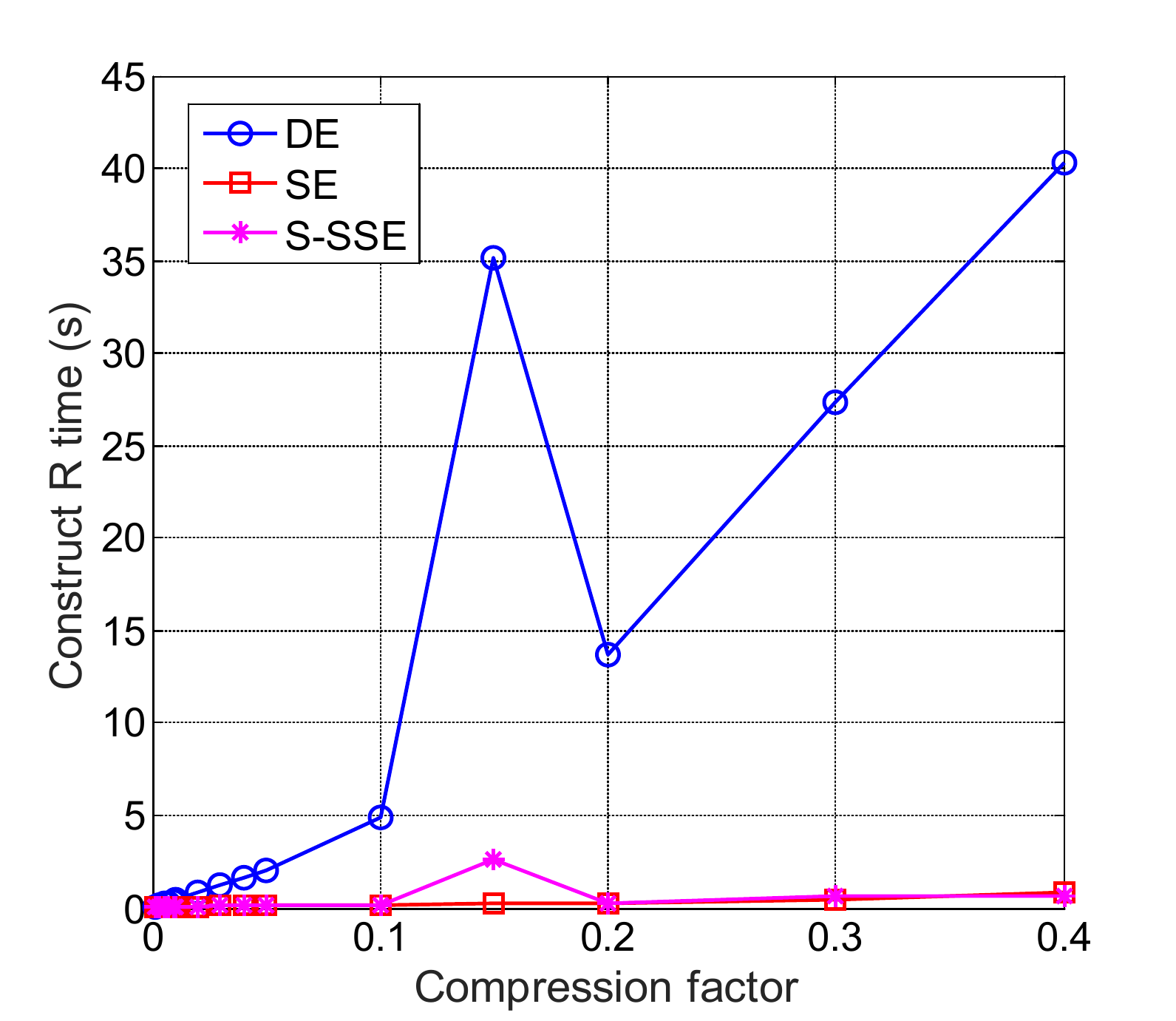}} 
 \caption{The time of constructing matrix $R$ for various methods.}
 \label{fig:time_R}
 \end{figure}

 \begin{figure}[!h]
\centering
 \subfigure[DNA]
    {\label{fig:dna_XR} 
    \includegraphics[width=0.31\textwidth]{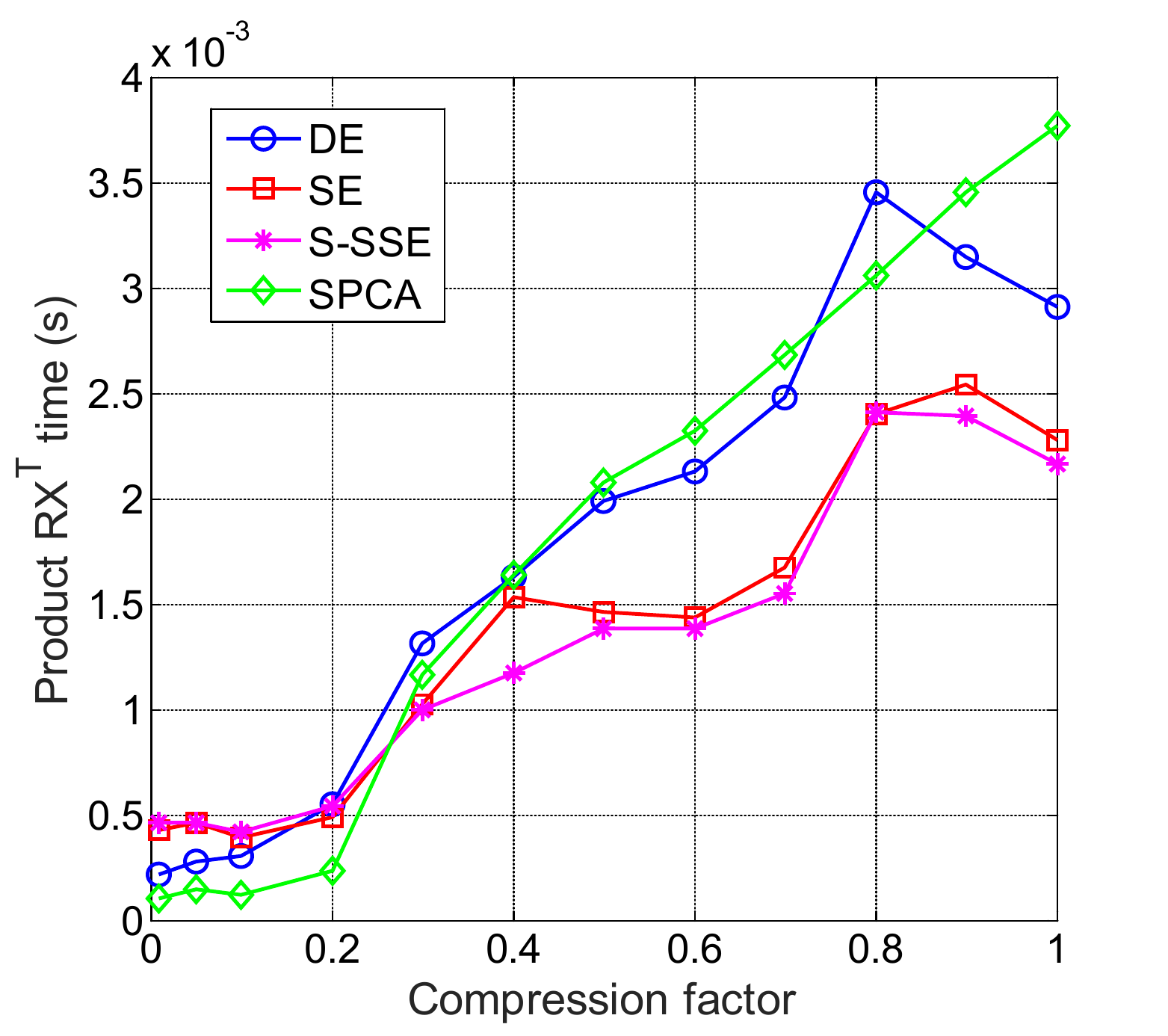}}
 \subfigure[USPS]{
    \label{fig:usps_XR} 
    \includegraphics[width=0.31\textwidth]{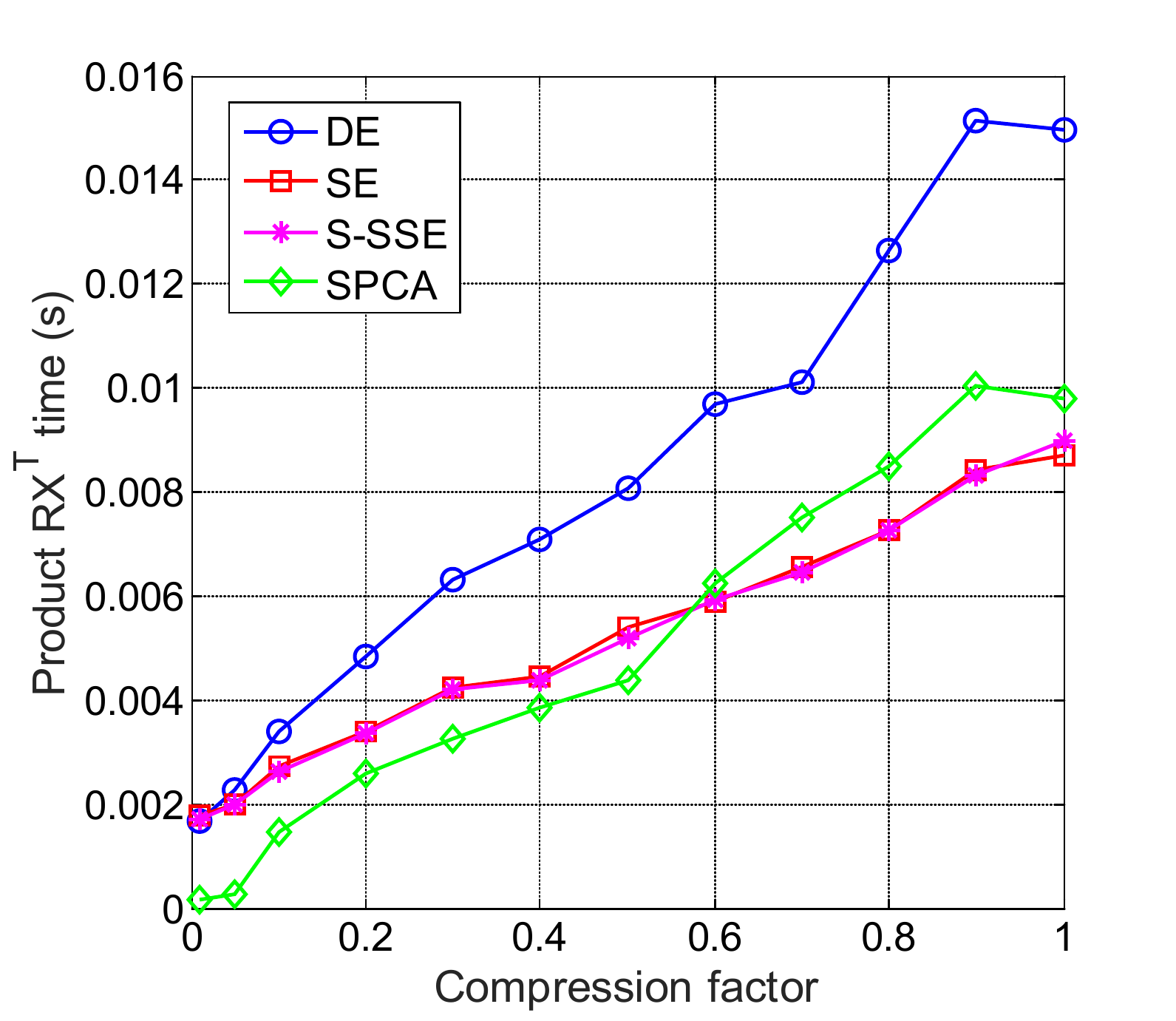}}
 \subfigure[MADELON]{
    \label{fig:madelon_XR} 
    \includegraphics[width=0.31\textwidth]{madelon_scale_mean0var1_time_R_3alg_spca}}
 \vfill
 \subfigure[MNIST]{
    \label{fig:mnist_XR} 
    \includegraphics[width=0.31\textwidth]{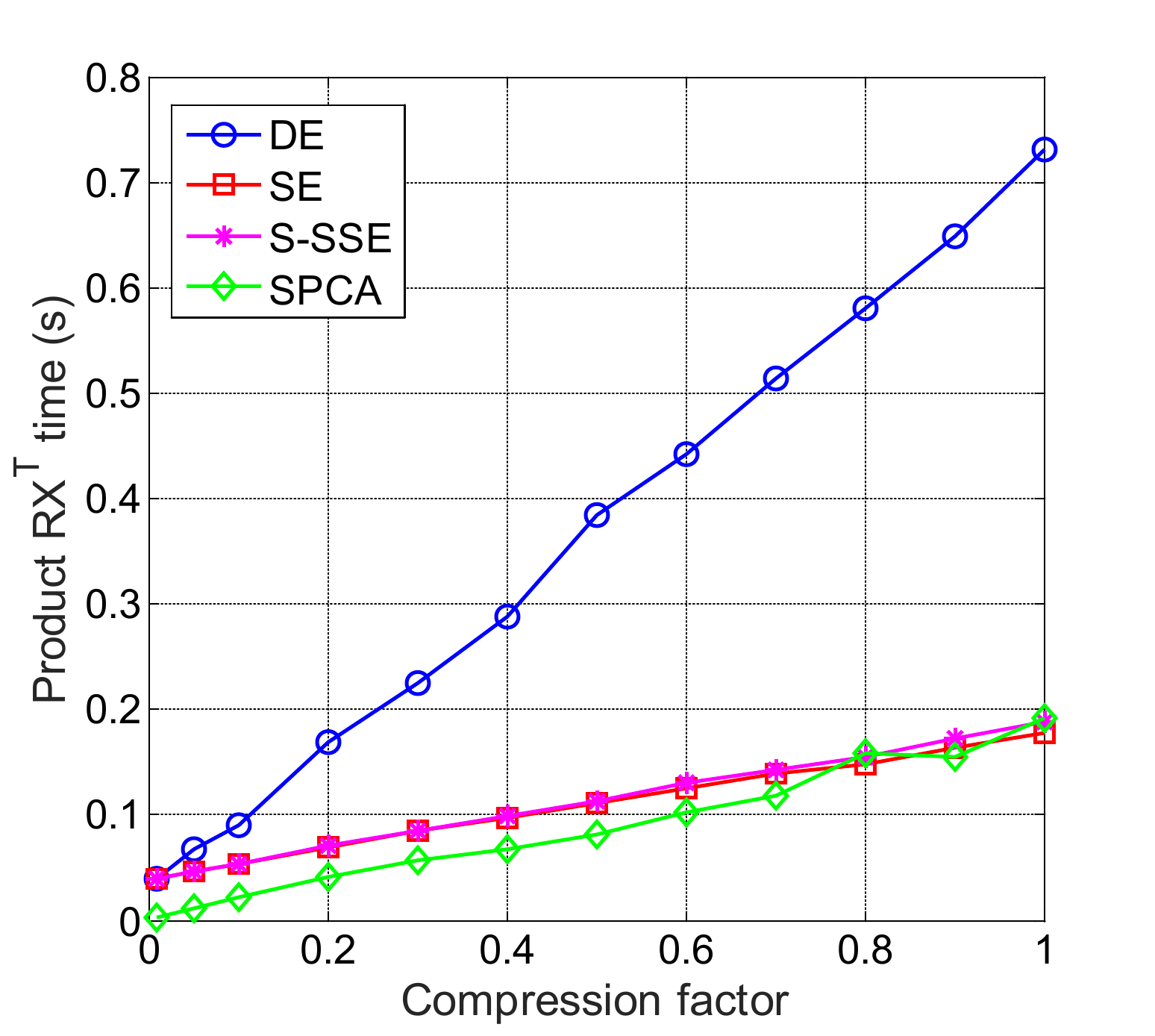}}
 \subfigure[GISETTE]{
    \label{fig:gisette_XR} 
    \includegraphics[width=0.31\textwidth]{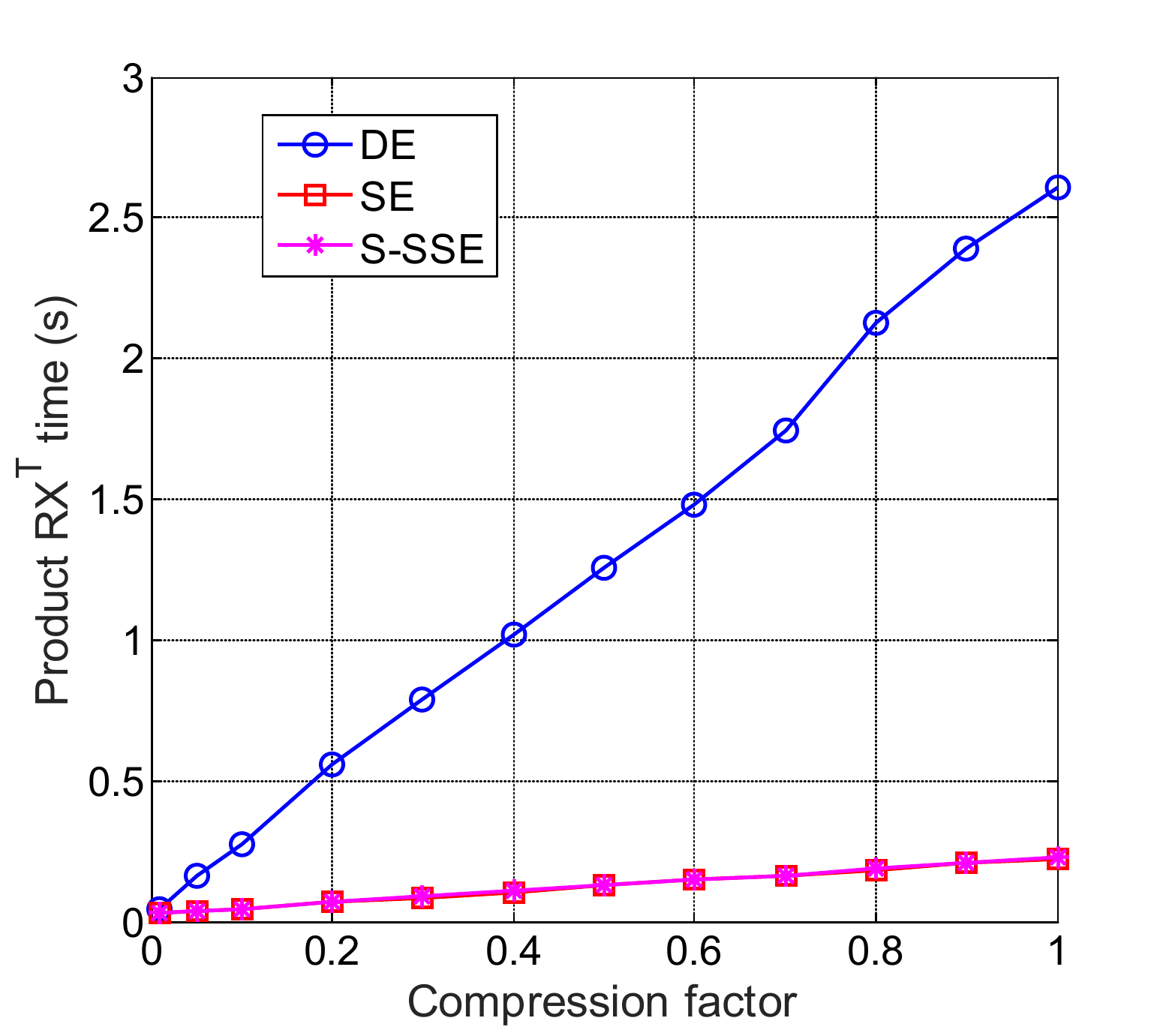}}
  \subfigure[SECTOR]{
    \label{fig:sector} 
    \includegraphics[width=0.31\textwidth]{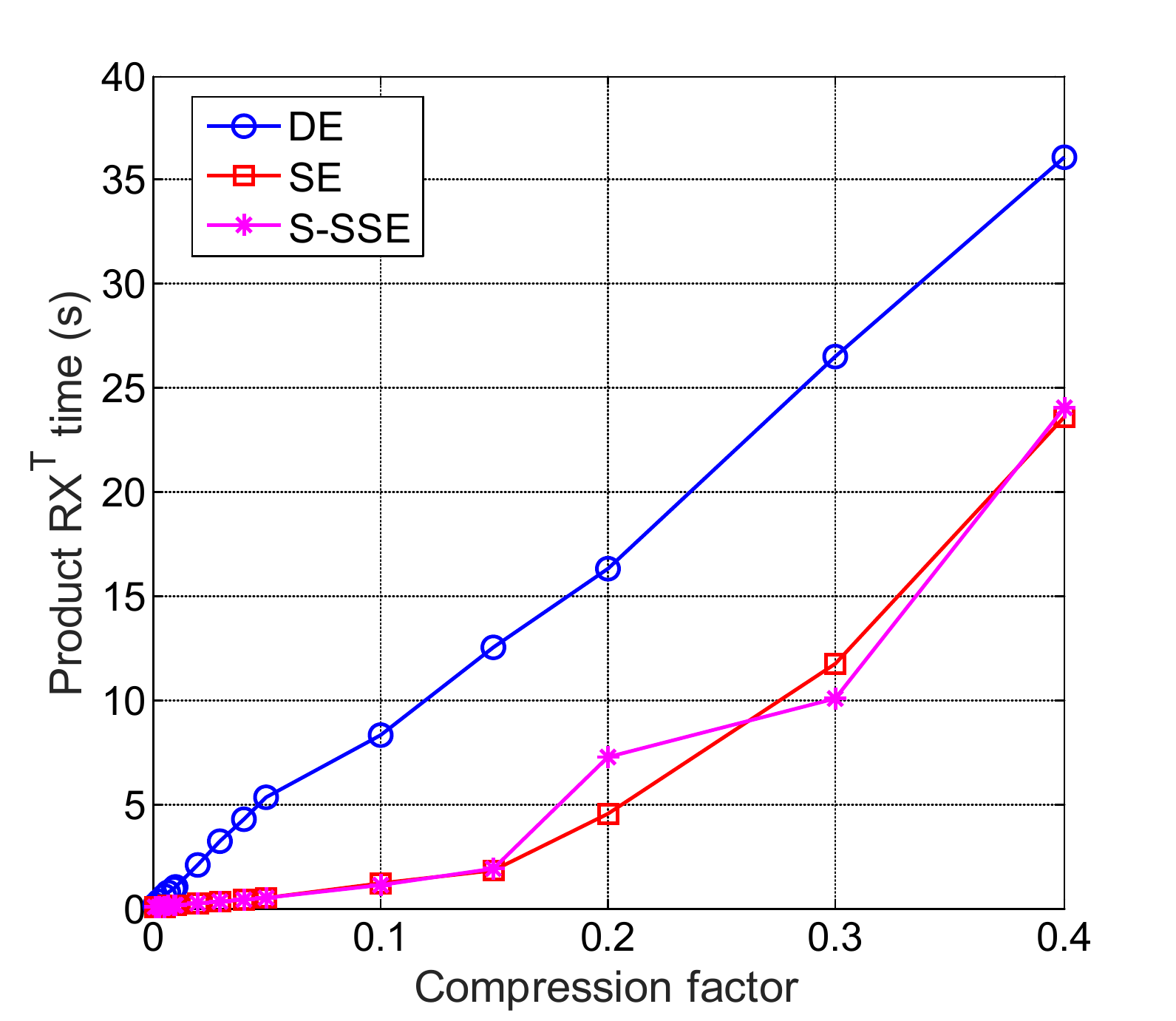}}
   \caption{The time of computing the multiplication $RX^\top$ for various methods.}
 \label{fig:time_XR}
 \end{figure}

From Figs. \ref{fig:acc} - \ref{fig:time_XR}, we can draw the following conclusions:
\begin{itemize}
   \item Fig. \ref{fig:acc} indicates that the S-SSE has superior performance comparing with other RP based methods in terms of accuracy. This verifies our theoretical results. For high-dimension datasets, such as GISETTE, USPS and MNIST, the compression factor can be set as a very small number (0.2, 0.4, 0.3 for these three datases respectively) to obtain satisfactory performance. The SPCA performs well on one dataset DNA, but on the other datasets, its accuracy is lower than other comparison methods. Moreover, with the increasing of extracted dimensions, the accuracy of SPCA may decline because the extracted features may contain noise.
  \item When compression factor is set as $1$, our algorithm has the same accuracy with standard $k$-means, while the DE and the SE have lower accuracy on some datasets, such as DNA and MADELON. That is because non-zero entries are distributed uniformly in our matrix. When $compression~factor=1$, the S-SSE matrix equals to a identity matrix whose columns are permuted, thus features are unchanged after feature extraction. Whereas, even when $compression~factor=1$, feature extraction by the DE and the SE are still the linear combination of original features rather than the original features themselves, which leads to lower accuracy.
  \item With regard to running time, the S-SSE and the SE are very similar to each other for constructing $R$ and for computing product $RX^\top$ on all datasets, which means that our method does not increase running time comparing with the SE, while the performance is improved. The DE is the slower method. That is because the DE matrix is not a sparse matrix, generating it and multiplying it with dataset matrix $X$ are time consuming. The SPCA is the slowest method to construct $R$, because SPCA needs to solve a optimization problem to obtain $R$, which is not easy and the computation is extraordinarily time consumption.
\end{itemize}

\section{Conclusion}
High dimensional data has provided a considerable challenge in designing machine learning algorithm. To address this obstacle, researchers apply dimensionality reduction algorithms first instead of directly working with high dimensional data. Random projection is more efficient than low rank based approaches, therefore it attracts a lot of researchers to study.
In this study, we design a stable sparse subspace embedding algorithm for dimensionality reduction. It overcomes the disadvantages of the state-of-art sparse embedding methods, such as the instability of matrix, the uneven distribution of nonzeros among columns in matrix. It is proved that the proposed method is stabler than the existing method, and it can preserve $(1+\epsilon)$-approximation after dimensionality reduction. The superior performance of our method are attributed to the uniform distribution of nonzeros in the matrix. The experimental results verify our theoretical analysis and show that compared with other dimensionality reduction methods, the new algorithm is stabler, can better maintain Euclidean distance between points, and can obtain better performance in machine learning algorithm.
We conclude this paper with two open questions. Is our stable idea effective for other RP approaches? Does our algorithm perform well on other machine learning algorithms besides $k$-mean clustering?
\section*{Acknowledgements}
This work is supported by the National Natural Science Foundation of China (NNSFC) [No. 61772020].
\bibliography{dim_reduce}
\end{document}